\def\BibTeX{{\rm B\kern-.05em{\sc i\kern-.025em b}\kern-.08em
    T\kern-.1667em\lower.7ex\hbox{E}\kern-.125emX}}
\newtheorem{remark}{Remark}
\newtheorem{lemma}{Lemma}
\newcommand{\cov}{C}
\newcommand{\SO}{\mathrm{SO}}
\newcommand{\supp}{\mathrm{supp}}
\newcommand{\var}{\mathrm{Var}}
\newcommand{\hpsi}{\widehat{\psi}}
\newcommand{\hh}{\widehat{h}}
\newcommand{\xh}{\widehat{x}}
\newcommand{\tpsi}{\widetilde{\psi}}
\newcommand{\E}{\mathbb{E}}
\newcommand{\R}{\mathbb{R}}
\newcommand{\T}{\mathbb{T}^2}
\newcommand{\Z}{\mathbb{Z}}
\newcommand{\I}{\mathcal{I}}
\newcommand{\Lb}{\mathbf{L}}
\begin{document}

\title{Texture synthesis via projection onto multiscale, multilayer statistics
\thanks{J.H. was supported by the National Science Foundation (grant \#1845856). M.H. acknowledges funding from the National Institutes of Health (grant \#R01GM135929), the National Science Foundation (grant numbers \#1845856 and \#1912906), and the Department of Energy (grant \#DE-SC0021152).}
}

\author{\IEEEauthorblockN{Jieqian He}
\IEEEauthorblockA{\textit{Dept. of Computational Math., Science \& Engineering} \\
\textit{Michigan State University}\\
East Lansing, Michigan, USA \\
\texttt{hejieqia@msu.edu}}
\and
\IEEEauthorblockN{Matthew Hirn}
\IEEEauthorblockA{\textit{Dept. of Computational Math., Science \& Engineering} \\
\textit{Department of Mathematics} \\
\textit{Center for Quantum Computing, Science \& Engineering} \\
\textit{Michigan State University} \\
East Lansing, Michigan, USA \\
\texttt{mhirn@msu.edu}}
}

\maketitle

\begin{abstract}
   We provide a new model for texture synthesis based on a multiscale, multi-layer feature extractor. Within the model, textures are represented by a set of statistics computed from ReLU wavelet coefficients at different layers, scales and orientations. A new image is synthesized by matching the target statistics via an iterative projection algorithm. We explain the necessity of the different types of pre-defined wavelet filters used in our model and the advantages of multi-layer structures for image synthesis. We demonstrate the power of our model by generating samples of high quality textures and providing insights into deep representations for texture images. 
\end{abstract}

\section{Introduction}

The texture synthesis problem asks one to generate new, perceptually accurate, texture images given a limited (often single) realization of the texture class in question. Texture images are distinguished from other types of images in that a texture image may be modelled as a stochastic process, indicating a type of random repetition of a potentially complex pattern. As such, within the field of image generative models, the texture synthesis problem is appealing because it allows for types of statistical analysis that are not possible in general image generation. Indeed, while recent works have proposed to use generative adversarial networks (GANs) \cite{NIPS2014_5423} to perform texture synthesis and related tasks \cite{pmlr-v70-bergmann17a, Jetchev2016TextureSW, Xian2018TextureGANCD}, classically texture synthesis models fall into two categories \cite{AKL201812}:
(i) non-parametric patch rearrangement methods that extract microscopic patterns from the reference image and randomly arrange these patterns in a new image; and (ii) parametric statistic-matching models that extract a set of empirical statistics from the reference texture, and generate a new image by selecting a random image with a similar statistical profile. 

This paper addresses the second type of model based on statistical matching. Such models have two challenges: (i) What is the set of statistics needed to characterize a large class of textures? and (ii) Given the statistical profile of a reference image, how does one generate a random image with the same statistical profile? Gatys and collaborators \cite{vggsyn} had great success in addressing these two challenges by extracting the covariance statistics of the filter responses at various layers of a pre-trained convolutional neural network (CNN), the VGG-19 network \cite{vgg}, and then generating a new image with matching statistics via back-propagation and stochastic gradient descent. This work in turn inspired several subsequent methods, including \cite{Gatys2015ANA, Ulyanov2016TextureNF, Johnson2016PerceptualLF, Bortoli2019MacrocanonicalMF}.

Despite the success of \cite{vggsyn}, though, the model is not perfect and many open questions remain for statistics-based texture synthesis models. Indeed, in a follow up paper \cite{Ustyuzhaninov:whatDoesItTakeTexture2017}, it is observed that high quality texture images can be synthesized using only a one-layer network with random, multiscale filters and rectified linear unit (ReLU) nonlinearity. The combination of the two papers \cite{vggsyn, Ustyuzhaninov:whatDoesItTakeTexture2017} raises questions with respect to the trade-off between network depth and the sizes of the receptive fields of the filters in the network. Additionally, putting the use of \textit{random} filters aside, the use of a single layer of \textit{multiscale} filters parallels classical work in the field that uses the statistics of multiscale wavelet coefficients to synthesize textures \cite{heeger:pyramidTexture1995, zhu:FRAME1998, Portilla, briand:Heeger-Bergen2014}.
Among these methods, the algorithm of Portilla and Simoncelli \cite{Portilla} is particularly notable for its use of statistics based on the modulus and phase of complex wavelet valued coefficients, in addition to its impressive performance which is often still bench-marked against today. 

In this paper we propose a multiscale, multilayer, nonlinear feature extractor for images based upon real-valued wavelet transforms, which in turn yields a set of statistics for use in texture synthesis. In addition to drawing inspiration from Portilla and Simoncelli \cite{Portilla} and Gatys \textit{et al.} \cite{vggsyn}, the model presented in this paper also draws upon ideas from the wavelet scattering transform \cite{scattering}, which itself has show good results for the synthesis of gray-scale textures \cite{microcanonical}.
Nevertheless, our algorithm has several novel aspects that we use to investigate the texture synthesis problem, and which provide insight into image feature extraction via convolutional networks. More specifically:
\begin{itemize}[topsep=0pt, itemsep=0pt]
    \item We provide an analysis of the types of filters required to obtain good synthesis results when combined with the ReLU nonlinearity.
    \item We investigate the trade-off between network depth and the maximum scale of the wavelet filters.
    \item We propose a CNN architecture that uses the ReLU nonlinearity and is provably invertible at each layer, which in turn allows us to adapt the projection synthesis algorithm of \cite{Portilla} to our setting. 
    \item We demonstrate our theoretical findings numerically through example synthesized images, and also compare our results to \cite{Portilla} and \cite{vggsyn}.
\end{itemize}

In Section \ref{sec: model} we present our statistical model in detail, while Section \ref{sec: synthesis algorithm} describes our synthesis algorithm. Section \ref{sec: numerical results} provides detailed numerical results, and Section \ref{sec: conclusion} contains a short conclusion. 

\section{Model}
\label{sec: model}

Set $\T := [-T, T]^2$ and $\R_+ := [0, \infty)$, and let $x : \T \rightarrow \R_+$ be a texture image, which we shall assume is in $\Lb^2 (\T)$. A statistics-based matching algorithm for texture synthesis specifies a family of (nonlinear) functions $U_k : \Lb^2 (\T) \rightarrow \Lb^2 (\T)$ and extracts a family of empirical statistics $Sx$ from $x$ based on
\begin{equation*}
    Sx = (S_k x)_k \, , \quad S_k x := \frac{1}{(2T)^2} \int_{\T} U_k x (u) \, du \, .
\end{equation*}
A new texture $y \in \Lb^2 (\T)$ 
is synthesized by drawing $y$ from the set of images with similar statistical profiles:
\begin{equation} \label{eqn: statistics texture model}
    y \sim \I_x := \{ z \in \Lb^2 (\T) : \| Sz - Sx \| \leq \varepsilon \} \, .
\end{equation}
If $x \sim X$, where $(X(u))_{u \in \T}$ is a stochastic process, and if $U_k X$ is stationary and ergodic, then $S_k x \rightarrow \E [U_k X]$ as $T \rightarrow \infty$. Thus, we can think of $S_k x$ as approximating the statistics $\E [U_k X]$ of the unknown process that generated $x$. 

The model \eqref{eqn: statistics texture model} is appealing because the statistical profile $Sx$ determines the texture class. It is thus paramount to determine a good set of functions $(U_k)_k$, and hence statistics $(S_k)_k$, the pursuit of which has ramifications in human and computer vision \cite{BalasVision, Freeman, OkazawaE351}. 

The method of Portilla and Simoncelli \cite{Portilla} defines the majority of their statistics by leveraging a complex valued wavelet transform and extracting statistics from the modulus and phase of the resulting wavelet coefficients. The first layer of our model also uses multiscale wavelet filters, but they are real valued and we replace the modulus and phase nonlinearities with the ReLU nonlinearity. In Sections \ref{sec: wavelet filters} and \ref{sec: first layer} we explain how the proper selection of such wavelet filters, though, when combined with ReLU, can distinguish between certain types of patterns in the same way that modulus and phase can. 

On the other hand, Gatys \textit{et al.} \cite{vggsyn} define their statistics using the Gram matrices of the filter responses at various layers in the VGG-19 network. The receptive field of the filters of the VGG-19 network are small, only $3 \times 3$ pixels, but the depth and pooling of the VGG-19 network allows such statistics to still capture complex multiscale patterns in texture images. Akin to the VGG network, in Section \ref{sec: second layer} we expand our set of functions $U_k$ by computing a second wavelet transform and ReLU nonlinearity. Such a procedure is inspired by the wavelet scattering transform \cite{scattering}, but as we will describe differs from the scattering transform in several significant ways.

\subsection{Wavelet filters}
\label{sec: wavelet filters}

Let $\xh (\omega)$, for frequencies $\omega \in \Omega := \{\pi k / T : k \in \Z^2\} \subset \R^2$, denote the Fourier transform of $x$: $\xh (\omega) := \int_{\T} x (u) e^{-i u \cdot \omega} \, du$. A wavelet $\psi \in \Lb^2 (\T)$ is an oscillating waveform that is localized in both space and frequency and has zero average. Inspired by previous work in wavelet based image processing, as well as recent analyses of the filters of the VGG network \cite{GabrielssonTDA}, we make use of three types of wavelets. 

The first two are directional wavelet filters. We select one even directional filter and one odd directional filter:
\begin{align*}
    \psi^e (u) &:= g(u) \cos (\xi \cdot u) \, , \\
    \psi^o (u) &:= g(u) \sin (\xi \cdot u) \, ,
\end{align*}
where $g$ is an even window function and $\xi \in \R^2$ is the central frequency of the wavelets. These wavelets oscillate in the direction $\xi$ and have localized Fourier transforms around $\xi$ and $-\xi$. These wavelets are rotated to obtain waveforms oscillating in different directions:
\begin{equation*}
    \psi_{\theta}^{\beta} (u) := \psi^{\beta} (R_{\theta}^{-1} u) \, , \quad \beta \in \{e, o \} \, ,
\end{equation*}
where $R_{\theta} \in \SO(2)$ is the $2 \times 2$ rotation matrix about the angle $\theta \in [0, 2\pi)$. We use $M$ angles $\theta \in \Theta_M := \{ m \pi / M : 0 \leq m < M \}$. 

The third type of wavelet is based on the polar coordinate representation $u = (r, \varphi) \in [0, \infty) \times [0, 2\pi)$, and oscillates along the angle parameter $\varphi$:
\begin{equation*}
    \psi_{\ell}^p (u) := a_{\ell}(u) \cos (\ell \varphi) \, ,
\end{equation*}
where $\ell \in \Z$ is the frequency of oscillation along the angle $\varphi$. If $a_{\ell}(u) = \widetilde{a}_{\ell}(r)$, then the function $a_{\ell}$ determines the frequency of oscillation of the filter along the radial parameter. In this case, $|\hpsi^p (\omega)|$
has an essential support in the shape of an annulus and the filter is omnidirectional. We restrict $0 \leq \ell < L$ and select $a_{\ell} (u) = \widetilde{a}_{\ell}(r)$ to be an oscillatory function that oscillates at a frequency approximately proportional to $L - 1 - \ell$, ensuring that the overall frequency support of $\psi_{\ell}^p$ is approximately fixed. 

Directional filters such as $\psi_{\theta}^e$ and $\psi_{\theta}^o$ are common in image processing and various analyses of CNN filters, e.g., \cite{GabrielssonTDA},
have shown that commonly used CNNs learn directional filters. In Section \ref{sec: first layer} we will motivate the seemingly redundant choice of using both an even and odd directional wavelet filter. By examining the filters of the VGG-19 network, though, one also finds omnidirectional filters. In practice (see Section \ref{sec: numerical results}) we find that such filters improve the quality of synthesized textures in which the image patterns do not align with a small subset of directions. 

All wavelets are dilated at dyadic scales to obtain a multiscale family of waveforms:
\begin{align*}
    \psi_{j,\alpha}^{\beta} (u) &:= 2^{-2j} \psi_{\alpha}^{\beta} (2^{-j} u) \, , \quad j \in \Z \, ,\\
    (\alpha, \beta) &\in \{(\theta, e), (\theta, o), (\ell, p) \} \, .
\end{align*}
Numerically, we may assume that $\xh (\omega)$ is supported on frequencies $\omega$ contained in $[-\pi, \pi]^2$. By design the collection of wavelet filters have collective frequency support in a ball, which we can assume is the frequency ball of radius $\pi$. In this case we complement the wavelet filters with two additional filters: (i) a non-negative low pass filter $\phi \in \Lb^2 (\T)$ that has Fourier transform essentially supported around the origin (since wavelets have zero average); and (ii) a high pass filter $h \in \Lb^2 (\T)$ that has Fourier transform essentially supported outside of the frequency ball $\{ \omega \in \Omega : |\omega| \leq \pi \}$ (in other words, $\hh$ is supported in the ``corners'' of $[-\pi, \pi]^2$). The scales $2^j$ of the wavelet filters are restricted to $0 \leq j < J$, where $J \leq J_{\max} = O(\log_2 T)$, and we dilate $\phi$ to the scale $2^J$ via $\phi_J (u) := 2^{-2J} \phi (2^{-J} u)$.

The wavelet transform of this paper computes the convolution of $x$ with all the aforementioned filters:
\begin{align*}
    W_J &x := \{ x \ast \phi_J \, , \, x \ast h \, , \, x \ast \psi_{j, \alpha}^{\beta} : 0 \leq j < J, \\
    &(\alpha, \beta) \in \{ (\theta, e), (\theta, o), (\ell, p) \}, \enspace \theta \in \Theta_M, \enspace 0 \leq \ell < L \} \, .
\end{align*}

A group of filters $\{f_k\}_k$ is said to form a frame for signals $x$ such that $\supp (\xh) \subset [-\pi, \pi]^2$ if there exists two constants $0 < A \leq B < +\infty$ such that:
\begin{equation*}
    A \leq \sum_{k} |\widehat{f}_k (\omega)|^2 \leq B \, , \quad \forall \, \omega \in \Omega \cap [-\pi, \pi]^2 \, .
\end{equation*}
We can define the dual filters as $\widehat{\widetilde{f}_k}(\omega) := \frac{\overline{\widehat{f}_k (\omega)}}{ \sum_k |\widehat{f}_k (\omega)|^2 }$. An image $x$ can be reconstructed from its filtrations by the filters $\{ f_k \}_k$ using the dual filters $\{ \widetilde{f}_k \}_k$ and the formula:
\begin{equation} \label{eqn: frame inverse}
    x = \sum_k x \ast f_k \ast \widetilde{f}_k \, .
\end{equation}
For appropriately chosen parameters, the collection of filters used to define the wavelet transform $W_J$ forms a frame. As such, we can recover $x$ from $W_J x$ using \eqref{eqn: frame inverse}. This property will be important in Section \ref{sec: synthesis algorithm} for developing an algorithm by which to synthesize a new texture.

\subsection{First layer statistics}
\label{sec: first layer}

We extract directly from the image $x$ the mean, variance, skewness, and kurtosis of the image intensities $(x(u))_{u \in \T}$, in addition to the min/max intensities. We then consider the low pass filtering $x \ast \phi_J$. Since $x(u) \geq 0$ and $\phi_J (u) \geq 0$, the mean of $x \ast \phi_J$ is proportional to to the mean of $x$ and does not need to be computed. We do add in the variance of the values $(x \ast \phi_J (u))_{u \in \T}$ to $Sx$. We also add in the variance of the high pass coefficients $(x \ast h(u))_{u \in \T}$.
These statistics are also used by Portilla and Simoncelli, and one can find additional motivation for their usefulness in \cite{Portilla}. 

In order to simplify notation, let $\lambda = (j, \alpha, \beta) \in \Lambda$ be any admissible triplet for the wavelets $\psi_{j, \alpha}^{\beta}$ described in Section \ref{sec: wavelet filters}, and denote these wavelets by $\psi_{\lambda}$. Like the low pass coefficients and the high pass coefficients, we could compute only the variance of the values $(x \ast \psi_{\lambda}(u))_{u \in \T}$, but in doing so we would miss important correlations between patterns in $x$ at different scales, orientations, and angular frequencies, as captured by our wavelets. In fact, results in \cite{microcanonical} indicate that the using only the variance of wavelet coefficients does not result in good texture synthesis for certain types of textures. An alternative would be to compute the covariance between $x \ast \psi_{\lambda}$ and $x \ast \psi_{\lambda'}$, but the frequency localization of the wavelets means that such statistics will be nearly zero, and hence meaningless, for most pairs $(\lambda, \lambda')$. One possible solution is to apply a pointwise nonlinear function $\sigma : \R \rightarrow \R_+$ to the wavelet coefficients, effectively pushing the high frequencies of $x \ast \psi_{\lambda}$ down to the low frequencies for each $\lambda$, which in turn generates non-trivial correlations between $x \ast \psi_{\lambda}$ and $x \ast \psi_{\lambda'}$. In \cite{Portilla}, Portilla and Simoncelli decompose complex-valued wavelet coefficients into their modulus and phase (two nonlinear transforms), and compute covariance-type statistics of the wavelet modulus coefficients and of the phase coefficients. More recently, Zhang and Mallat \cite{Zhang2019MaximumEM} developed a wavelet phase harmonic nonlinear transform (also for complex wavelets) and used the resulting covariance statistics for texture synthesis of select gray-scale textures.

In this work we set $\sigma (t) := \max (0, t)$, which is the rectified linear unit (ReLU) nonlinearity. In order to obtain an invertible transform, we also multiply the wavelet coefficients by $\pm 1$, thus yielding the nonlinear wavelet transform:
\begin{equation*}
    U^1_J x := \{x \ast \phi_J \, , \, x \ast h \, , \,  \sigma (\gamma \cdot x \ast \psi_{\lambda} ) : \gamma = \pm 1, \enspace \lambda \in \Lambda \} \, .
\end{equation*}
Since $t = \sigma (t) - \sigma (-t)$, one can recover $W_J x$ from $U_J^1 x$ and hence one can recover $x$ using \eqref{eqn: frame inverse}. We compute the Gram matrix correlation statistics between all pairs of nonlinear coefficients in $U_J^1 x$, 
\begin{align}
    \cov_x^1 &(\lambda, \gamma, \lambda', \gamma') := \label{eqn: first layer covariance} \\ 
    &\frac{1}{(2T)^2} \int_{\T} \sigma (\gamma \cdot x \ast \psi_{\lambda} (u)) \sigma (\gamma' \cdot x \ast \psi_{\lambda'} (u)) \, du \, , \nonumber
\end{align}
with the exception of $\lambda = \lambda'$ and $\gamma = -\gamma'$ as $\cov_x^1 (\lambda, \gamma, \lambda, -\gamma) = 0$.

Note that $|t| = \sigma(t) + \sigma(-t)$, and hence the statistics \eqref{eqn: first layer covariance} subsume the covariance statistics between wavelet absolute value coefficients, which are similar to the wavelet modulus statistics computed in \cite{Portilla}. It was observed in \cite{Zhang2019MaximumEM} that the ReLU nonlinearity is related to phase information in complex valued wavelet coefficients. In \cite{Portilla}, Portilla and Simoncelli motivate the inclusion of phase by considering two one-dimensional signals, a Dirac function and a step function. These two signals cannot be distinguished by the wavelet modulus coefficients alone. ReLU wavlet coefficients, on the other hand, can distinguish a Dirac function from a step function for either even or odd wavelets. However, the next theorem shows they have trouble distinguishing the relative intensity of these functions unless even and odd wavelets are used together.

\begin{restatable}{theorem}{thmMorlet} 
\label{thm: necessity of morlet}
Define $y_1 (t) := \delta (t)$ and $y_2 (t) := \bm{1}_{[0, \infty)}(t)$. Let $\tpsi^e \in \Lb^2 (\R)$ be a one-dimensional even wavelet, and let $\tpsi^o \in \Lb^2 (\R)$ be a one-dimensional odd wavelet. Define the $2 \times 2$ correlation matrices $\cov_{y_k}^{\beta}$ as:
\begin{equation*}
    \cov_{y_k}^{\beta} (\gamma, \gamma') := \int_{\R} \sigma (\gamma \cdot y_k \ast \tpsi^{\beta} (t)) \sigma (\gamma' \cdot y_k \ast \tpsi^{\beta} (t)) \, dt \, ,
\end{equation*}
for $\beta \in \{e, o\}$ and $\gamma \in \{-1,+1\}$. Then $\cov_{y_1}^e \neq \cov_{-y_1}^e$ and $\cov_{y_2}^e = \cov_{-y_2}^e$, while $\cov_{y_1}^o = \cov_{-y_1}^o$ and $\cov_{y_2}^o \neq \cov_{-y_2}^o$.

\end{restatable}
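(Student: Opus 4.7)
\medskip

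\noindent\textbf{Proof plan.} First I would reduce the two convolutions to explicit functions. By the sifting property of $\delta$, $y_1 \ast \tpsi^{\beta} = \tpsi^{\beta}$, and a straightforward computation shows $y_2 \ast \tpsi^{\beta}(t) = \Psi^{\beta}(t) := \int_{-\infty}^{t} \tpsi^{\beta}(u)\,du$. Combining the wavelet zero-mean condition $\int_{\R} \tpsi^{\beta} = 0$ with a change of variables $u \mapsto -u$ would let me deduce the symmetries of these antiderivatives: $\tpsi^{e}$ even implies $\Psi^{e}$ is odd, while $\tpsi^{o}$ odd implies $\Psi^{o}$ is even. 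So among the four integrands $\tpsi^e, \Psi^e, \tpsi^o, \Psi^o$, two are even ($\tpsi^{e}$ and $\Psi^{o}$) and two are odd ($\tpsi^{o}$ and $\Psi^{e}$). This parity dichotomy is exactly what drives the four-way split in the statement.

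Next I would isolate a one-line symmetry lemma: if a real function $f \in \Lb^2(\R)$ is odd, then for any $\gamma, \gamma' \in \{-1,+1\}$,
\begin{equation*}
    \int_{\R} \sigma(\gamma f(t))\,\sigma(\gamma' f(t))\,dt = \int_{\R} \sigma(-\gamma f(t))\,\sigma(-\gamma' f(t))\,dt \, ,
\end{equation*}
which is immediate from the substitution $t \mapsto -t$ together with $f(-t) = -f(t)$. Applying this to $f = \tpsi^o$ yields $\cov_{y_1}^o = \cov_{-y_1}^o$, and applying it to $f = \Psi^e$ yields $\cov_{y_2}^e = \cov_{-y_2}^e$, handling the two equalities.

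For the two inequalities I would look at a single off-diagonal-free entry, namely $(\gamma, \gamma') = (1,1)$. For an even function $f$, the values $\cov(1,1) = \|\sigma(f)\|_2^2$ and $\cov_{-\cdot}(1,1) = \|\sigma(-f)\|_2^2$ are the squared $\Lb^2$ norms of the positive and negative parts of $f$. Because $f$ is a wavelet, these two parts have equal $\Lb^1$ norm (the mean-zero condition), but for the structured even/odd wavelets considered in the paper, $\tpsi^{e}(u) = g(u)\cos(\xi \cdot u)$ and $\Psi^{o}$ arising from $\tpsi^o(u) = g(u)\sin(\xi \cdot u)$, the positive and negative lobes are concentrated at different distances from the origin and thus weighted differently by the even window $g$. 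This gives distinct $\Lb^2$ masses on the two sides, so $\cov_{y_1}^{e}(1,1) \neq \cov_{-y_1}^{e}(1,1)$ and $\cov_{y_2}^{o}(1,1) \neq \cov_{-y_2}^{o}(1,1)$, hence the matrix inequalities.

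The routine pieces are the convolution identities and the $t \mapsto -t$ symmetry lemma. The main obstacle is making the inequalities in the two even cases rigorous, since they do not follow from parity alone; one genuinely needs to use the Gabor-type structure of $\tpsi^{\beta}$ (a localized even window modulated by $\cos$ or $\sin$ at frequency $\xi \neq 0$) to conclude that the positive and negative parts carry unequal $\Lb^2$ mass. I would handle this by an explicit estimate for $\tpsi^e$ and by first computing $\Psi^o$ explicitly for $\tpsi^o(u) = g(u)\sin(\xi u)$ to verify that $\Psi^o$ is not antisymmetric about its own zero set, so the two squared norms differ.
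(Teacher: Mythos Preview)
Your proposal is correct and follows essentially the same route as the paper's own proof: the same convolution identities $y_1\ast\tpsi^\beta=\tpsi^\beta$ and $y_2\ast\tpsi^\beta=\Psi^\beta$, the same parity observations ($\Psi^e$ odd, $\Psi^o$ even), the same odd-function symmetry lemma via $t\mapsto -t$ to obtain the two equalities, and the same focus on the $(+1,+1)$ entry for the two inequalities. The only difference is that the paper treats the inequalities informally, asserting that ``generally'' $\cov_{y_1}^e(+1,+1)\neq\cov_{-y_1}^e(+1,+1)$ and $\cov_{y_2}^o(+1,+1)\neq\cov_{-y_2}^o(+1,+1)$ and then pointing to numerical plots, whereas you sketch an analytic argument via the Gabor structure of $\tpsi^\beta$; your version is in that respect slightly more detailed than what appears in the paper.
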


The proof of Theorem \ref{thm: necessity of morlet} is given in Appendix \ref{sec: proofs of theorems}. It shows both even and odd wavelets are necessary in our model. For images $x$, the Dirac signal $y_1$ is similar to a dividing line that separates two regions of the same shade, which occurs in many types of texture images. This result shows that ReLU nonlinear wavelet coefficient correlations, when computed with an odd wavelet, cannot correctly determine the brightness of the dividing line relative to the regions it separates. Similarly, ReLU nonlinear wavelet coefficients, when computed with an even wavelet, cannot determine whether the color gradient across an edge is positive or negative. Numerical results illustrating these effects are given in Section \ref{sec: filter comparison}.

\subsection{Second layer statistics}
\label{sec: second layer}

ReLU wavelet correlation statistics can be complemented by two-layer statistics that are derived from feature maps that combine image information across scales before iterating the operator $U_J^1$. In particular, we compute
\begin{align*}
    U_J^2 &x := \Bigg\{ U_J^1 \left( \sum_{j=0}^{J-1} \sigma (\gamma \cdot x \ast \psi_{j, \alpha}^{\beta}) \right) : \\
    &(\alpha, \beta) \in \{ (\theta, e), (\theta, o), (\ell, p) \}, \enspace \theta \in \Theta_M, \enspace 0 \leq \ell < L \Bigg\} \, .
\end{align*}
We then compute the variance statistics of the low and high pass maps of $U_J^2 x$, and the correlation statistics between all pairs of the nonlinear wavelet maps contained in $U_J^2 x$, with the same exceptions as in the first layer. 

By iterating upon the map $U_J^1$, the map $U_J^2$ bears some similarity to the wavelet scattering transform \cite{scattering}. However, there are several important differences. As already discussed, we utilize the ReLU nonlinearity and a family of real valued wavelets, as opposed to complex valued wavelets and the modulus nonlinearity. Furthermore, the map $U_J^1$ defined here is invertible, unlike the scattering propagation operator. Finally, before iterating the map $U_J^1$, we sum over the scale index $j$ of the nonlinear maps $\sigma (\gamma \cdot x \ast \psi_{j, \alpha}^{\beta})$. This operation is akin to a $1 \times 1$ convolution operation in the VGG-19 CNN (and other CNNs), but unlike in the VGG network in which the filters being summed over have receptive fields with the same size, here we aggregate over nonlinear multiscale wavelet filtrations that allows our network to link together correlated patterns at multiple scales. This operation also has the effect of reducing the number of second layer maps, and hence statistics. Perhaps surprisingly, the operator $U_J^2$ is also invertible under appropriate conditions on the wavelets defined in Section \ref{sec: wavelet filters}. The following theorem is proved in Appendix \ref{sec: proofs of theorems}.

\begin{restatable}{theorem}{thmsecondlayerinverse} \label{thm: 2nd layer inverse}
If $\{ \phi_J, h, \psi_{j, \theta}^o : 0 \leq j < J, \enspace \theta \in \Theta_M \}$ forms a frame and if $\widehat{g}$ is non-negative, radial, and a decreasing function of $|\omega|$, then $x \mapsto \{ x \ast \phi_J \, , \, x \ast h \, , \, U_J^2 x \}$ is invertible. 
\end{restatable}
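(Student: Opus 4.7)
The plan is to reduce, in two stages, the inversion of $x \mapsto \{x \ast \phi_J, x \ast h, U_J^2 x\}$ to an application of the frame reconstruction formula \eqref{eqn: frame inverse} for a new family of filters built from sums across scales of the odd directional wavelets.

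First, for each admissible triple $(\gamma, \alpha, \beta)$ with $\gamma \in \{\pm 1\}$, I would recover from $U_J^2 x$ the auxiliary signal
\[
 s_{\gamma, \alpha, \beta}(x) := \sum_{j=0}^{J-1} \sigma(\gamma \cdot x \ast \psi_{j,\alpha}^{\beta})
\]
by inverting $U_J^1$, which is possible because Section \ref{sec: first layer} observes that $t = \sigma(t) - \sigma(-t)$ recovers $W_J$ on each input and the full wavelet collection then forms a frame (the hypothesis already forces this, since appending the even and polar filters only raises $\sum_k |\widehat{f}_k(\omega)|^2$). Applying the same identity once more, the difference $s_{+1,\theta,o}(x) - s_{-1,\theta,o}(x)$ equals $x \ast \Psi_\theta^o$, where $\Psi_\theta^o := \sum_{j=0}^{J-1} \psi_{j,\theta}^o$. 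Together with the given $x \ast \phi_J$ and $x \ast h$, this reduces the problem to reconstructing $x$ from its filtrations by $\mathcal{F} := \{\phi_J, h, \Psi_\theta^o : \theta \in \Theta_M\}$.

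The main obstacle, and the reason the extra hypotheses on $\widehat{g}$ appear in the statement, is to verify that $\mathcal{F}$ itself is a frame; a priori, the sum $\widehat{\Psi}_\theta^o = \sum_j \widehat{\psi}_{j,\theta}^o$ could suffer destructive interference across scales, even if the individual $\widehat{\psi}_{j,\theta}^o$ collectively satisfy a frame inequality. The plan is to rule this out by a sign argument. A direct computation gives
\[
 \widehat{\psi}^o(\omega) = -\tfrac{i}{2}\bigl(\widehat{g}(\omega - \xi) - \widehat{g}(\omega + \xi)\bigr),
\]
which is purely imaginary. Because $\widehat{g}$ is non-negative, radial, and decreasing in $|\omega|$, the sign of $\operatorname{Im}\widehat{\psi}^o(\omega)$ coincides with $\operatorname{sgn}(\omega \cdot \xi)$, independent of the magnitude of $\omega$. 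Passing to $\widehat{\psi}_{j,\theta}^o(\omega) = \widehat{\psi}^o(2^j R_\theta^{-1} \omega)$, the sign of the imaginary part equals $\operatorname{sgn}(\omega \cdot R_\theta \xi)$, which is independent of $j$. All scales therefore contribute coherently, so the elementary inequality $(\sum_j a_j)^2 \geq \sum_j a_j^2$ for same-sign reals yields $|\widehat{\Psi}_\theta^o(\omega)|^2 \geq \sum_j |\widehat{\psi}_{j,\theta}^o(\omega)|^2$.

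Summing this bound across $\theta$ and adding $|\widehat{\phi}_J(\omega)|^2 + |\widehat{h}(\omega)|^2$, the assumed lower frame bound for $\{\phi_J, h, \psi_{j,\theta}^o\}$ transfers directly to a lower frame bound for $\mathcal{F}$; the upper bound is immediate, since $\mathcal{F}$ is a finite collection and each $\widehat{\Psi}_\theta^o$ is bounded. Thus $\mathcal{F}$ is a frame, and $x$ is recovered from $\{x \ast f : f \in \mathcal{F}\}$ via \eqref{eqn: frame inverse}, closing the chain of inversions and proving the theorem.
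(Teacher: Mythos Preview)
Your proposal is correct and follows essentially the same approach as the paper: invert $U_J^1$ on each scale-summed input to recover $s_{\gamma,\alpha,\beta}$, undo the ReLU via $t=\sigma(t)-\sigma(-t)$ to obtain $x\ast\Psi_\theta^o$, and then show that $\{\phi_J,h,\Psi_\theta^o\}$ is a frame by the same sign-coherence argument (the paper isolates this last step as a separate lemma and phrases the sign condition as a comparison of $|\omega-\xi|$ with $|\omega+\xi|$, which is equivalent to your $\operatorname{sgn}(\omega\cdot R_\theta\xi)$). The only cosmetic difference is that the paper carries along the even and polar scale-summed filters in the final frame as well, whereas you observe that the odd ones already suffice; both routes are valid.
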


\section{Synthesis algorithm}
\label{sec: synthesis algorithm}

Since both operator $U_J^1$ and operator $U_J^2$ (combined with the low pass and high pass coefficients) are invertible, we can adapt the iterative projection algorithm of \cite{Portilla} in order to synthesize a new texture $x^{\ast}$ with approximately the same statistical profile as $x$. We describe our version of the projection algorithm in this section. 

Let us first collect the statistics described in Section \ref{sec: model}:
\begin{itemize}[topsep=0pt, itemsep=0pt]
    \item $S_J^0 x :=$ six pixel intensity statistics, given by the mean, variance, skewness, kurtosis, min, and max of $(x(u))_{u \in \T}$. 
    \item $S_J^1 x := \{ \var (x \ast \phi_J) \, , \, \var (x \ast h) \, , \, C_x^1 \}$, which are the statistics derived from the first layer coefficients. 
    \item $S_J^2 x := \{ S_J^2 x_{\phi_J} \, , \, S_J^2 x_h \, , \, C_x^2 \}$, which consists of the second layer low pass variances ($S_J^2 x_{\phi_J}$) and high pass variances ($S_J^2 x_h$), and the second layer correlation statistics between ReLU wavelet maps ($C_x^2$).
\end{itemize}

Now let $U_J^1 x_{\psi}$ denote the collection of nonlinear ReLU wavelet coefficient maps of $U_J^1 x$; let $U_J^2 x_{\phi_J}$ denote the collection of second layer low pass maps; let $U_J^2 x_h$ denote the collection of second layer high pass maps; and let $U_J^2 x_{\psi}$ denote the collection of second layer nonlinear ReLU wavelet coefficient maps. Given a reference image $x$, we start by computing its statistical profile $S_J x = (S_J^0 x, S_J^1 x, S_J^2 x)$. We then initialize our synthesized image with a random noise image $x_0$ where each $x_0 (u)$ is an i.i.d. sample from the uniform distribution.

Let $x^t$ denote the synthesized image after $t$ iterations. The algorithm first updates $x^t$ by directly modifying its intensities $(x^t(u))_{u \in \T}$ so that $S_J^0 x^t = S_J^0 x$. It then computes $U_J^1 x^t$ using the modified $x^t$. The low pass coefficients $x^t \ast \phi_J$ and the high pass coefficients $x^t \ast h$ are adjusted so that $\var (x^t \ast \phi_J) = \var (x \ast \phi_J)$ and $\var (x^t \ast h) = \var (x \ast h)$. All of these steps are computed in the exact same fashion as \cite{Portilla}. Finally, the nonlinear coefficient maps $U_J^1 x_{\psi}^t$ are adjusted to match the target correlation matrix so that $C_{x^t}^1 = C_x^1$; this step is performed in a way that is similar to how \cite{Portilla} updates the wavelet modulus coefficients. If the algorithm is using only first layer statistics, it then inverts $U_J^1 x^t$ to obtain $x^{t+1}$, and the process repeats itself.

On the other hand, if the algorithm is using second layer statistics, it then decomposes the updated maps $U_J^1 x_{\psi}^t$ further by computing $U_J^2 x^t$. The algorithm updates the collection of second layer maps $U_J^2 x^t$ by matching $S_J^2 x^t$ to $S_J^2 x$ in a similar fashion as the first layer.
At this point, the algorithm inverts the updated collection $\{ x^t \ast \phi_J \, , \, x^t \ast h \, , \, U_J^2 x^t \}$ using Theorem \ref{thm: 2nd layer inverse} to obtain $x^{t+1}$.

\section{Numerical Results}
\label{sec: numerical results}

We implement several texture synthesis experiments with the goals of (i) numerically verifying theoretical assertions made in previous sections; (ii) understanding the effect of hyper-parameter choices on the quality of synthesized textures; and (iii) comparing to other commonly used algorithms. Our texture images are taken from DTD database\footnote{https://www.robots.ox.ac.uk/~vgg/data/dtd/} and CG Texture database\footnote{https://www.textures.com/}.  Every image is resized to $256 \times 256$ for consistency. In our experiments, the wavelet transform is implemented in the frequency field using the fast Fourier transform. Therefore, we also periodize certain images to avoid border effect \cite{moisan:hal-00388020}. For directional wavelets, we fix the total number of rotations as $M=4$.
For omnidirectional wavelets, we fix the total number of oscillations at $L=4$. The maximum scale is $J_{\max} = 6$. More implementation details are given in Appendix \ref{sec: implemetation details}.
In the following subsections, we numerically prove the advantages of using all three types of filters (Section \ref{sec: filter comparison}); we examine the role of the maximum scale $2^J$ (Section \ref{sec: max scale comparison}); and we compare the one-layer synthesis to the two-layer synthesis, thus examining the role of network depth (Section \ref{sec: layers analysis}). Finally, in Section \ref{sec: methods comparison} we also compare our results to Gatys \textit{et al.} \cite{vggsyn} and Portilla and Simoncelli \cite{Portilla}. 

\subsection{Filter comparison}
\label{sec: filter comparison}

Theorem \ref{thm: necessity of morlet} proves both even and odd wavelets are important for texture synthesis. Figure \ref{fig:filter} validates this statement numerically. We see, for example, that synthesis with odd wavelets is prone to blurring edges and even flipping the colors of enclosed regions that should have the same color (e.g., the last row of Figure \ref{fig:filter}). When using only even wavelets, images with banded colors, for example the third row of Figure \ref{fig:filter}, are blurred. Synthesized images using both even and odd wavelets are generally a clear improvement over their single wavelet-type counterparts. 

\begin{figure}
    \centering
    \includegraphics[width = 0.95 \linewidth]{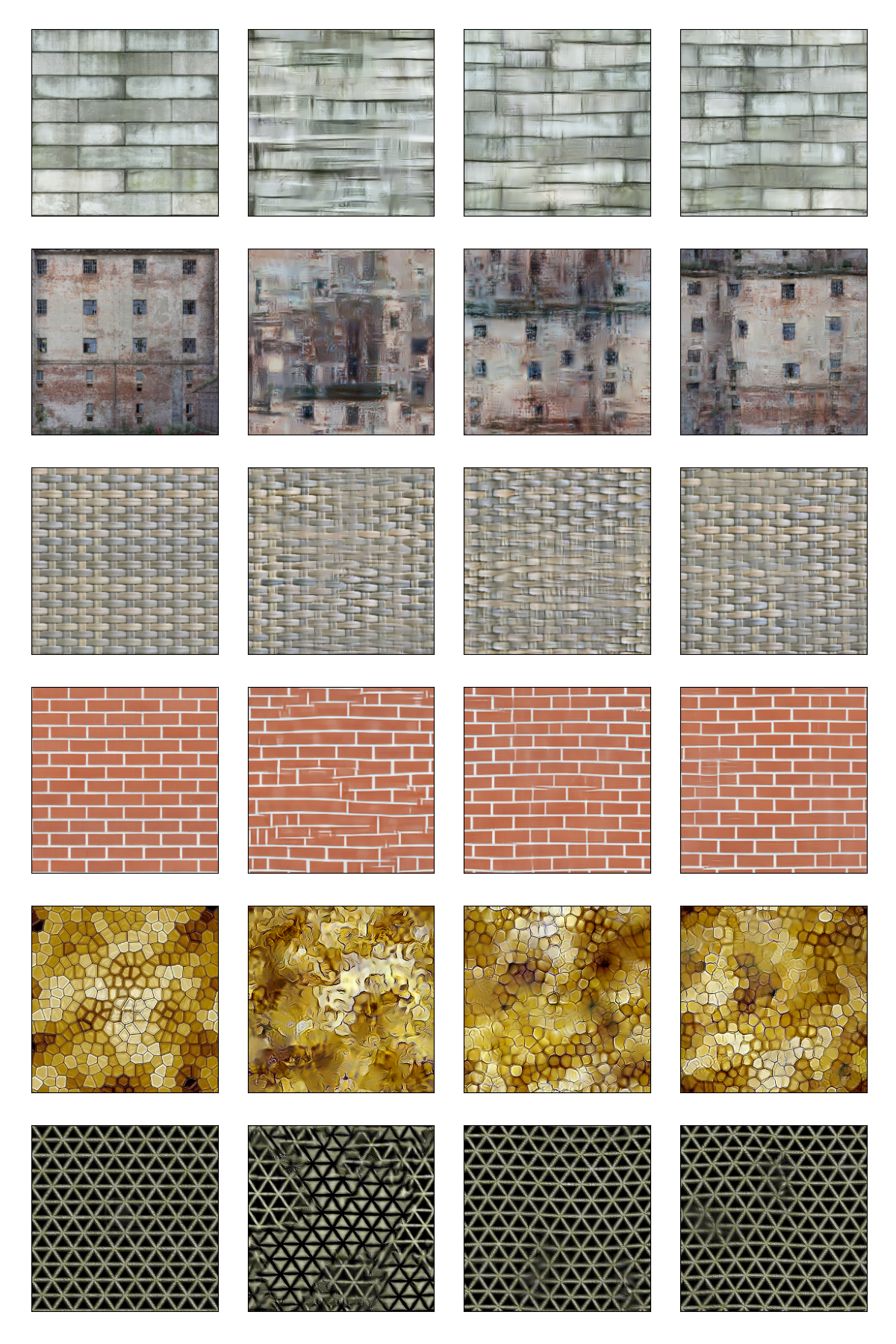}
    \caption{Synthesis results from one layer ($J=6$) with different types of wavelet filters. \textbf{Left:} Original image. \textbf{Middle left:} First layer synthesis results with only odd wavelets. \textbf{Middle right:} First layer synthesis results with only even wavelets. \textbf{Right:} First layer synthesis results with both even and odd wavelets. }
    \label{fig:filter}
\end{figure}

Figure \ref{fig:radial} shows the necessity for using omnidirectional wavelets. The advantages are clearest for rounded shapes or swirls. Such patterns have no clear direction.
For example, in the swirly texture in the last row of Figure \ref{fig:radial}, the omnidirectional wavelets do a better job of reproducing the long swirls. The round-shaped pebbles and dots (rows one and three of Figure \ref{fig:radial}) have smoother edges and a cleaner background when adding the omnidirectional wavelets.

\begin{figure}
    \centering
    \includegraphics[width = 0.77 \linewidth]{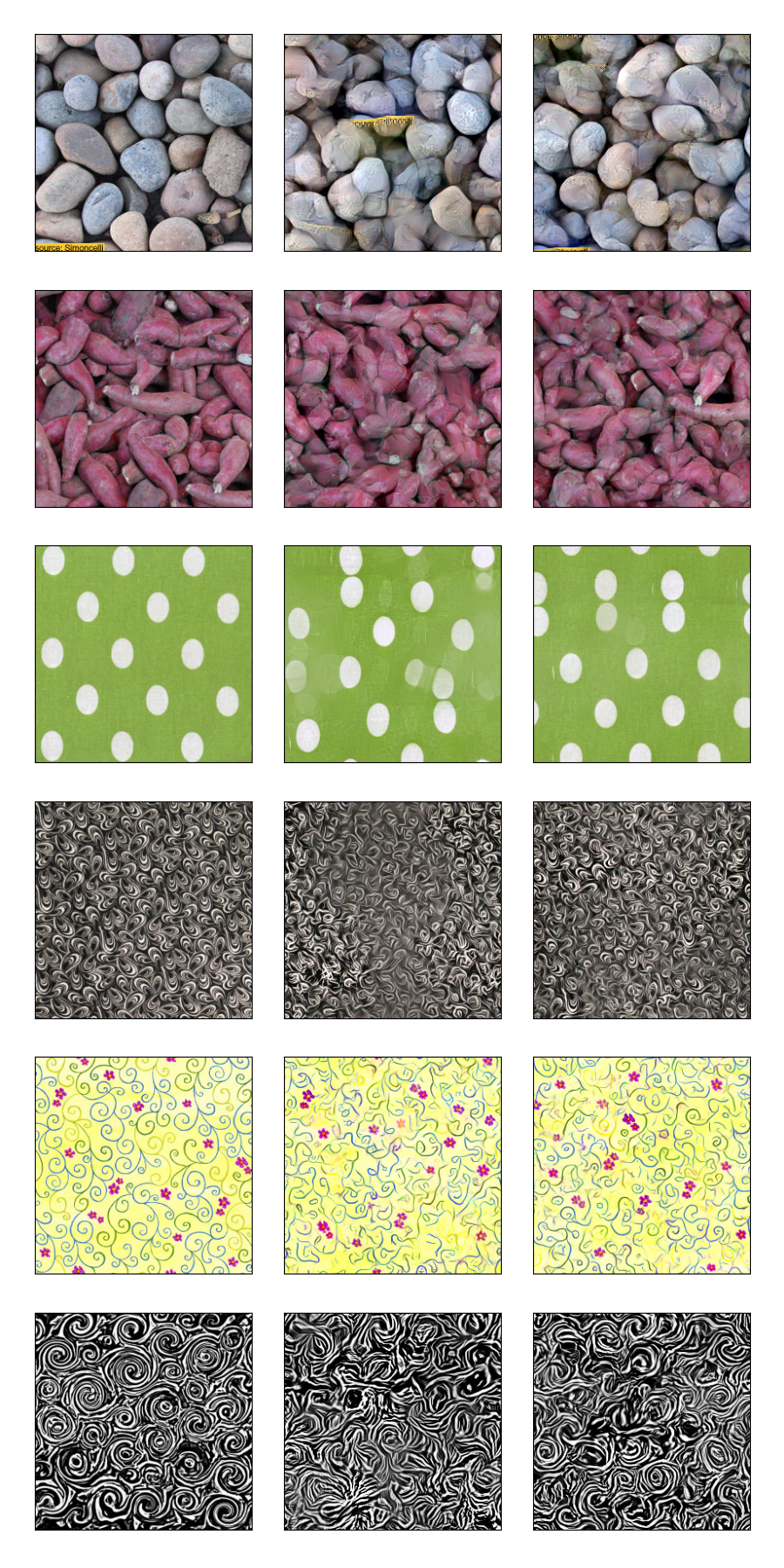}
    \caption{Synthesis results from two layers ($J=5$) with/without omnidirectional wavelets. \textbf{Left:} Original image. \textbf{Middle:} 2nd layer synthesis with even and odd wavelets. \textbf{Right:} 2nd layer synthesis with even, odd and omnidirectional wavelets.}
    \label{fig:radial}
\end{figure}

\subsection{Comparison of maximum scale}
\label{sec: max scale comparison}

Figure \ref{fig:scales} shows synthesis results with different numbers of scales from the two layer model. With $J=3$, the statistics are not able to capture macroscopic patterns.
Therefore the edges of synthesized bricks (rows one and three) and frames (second to last row) are not straight or continuous. The reproduced house (second row) and dots (third last row) are more random compared to the original image. Larger scales can also capture longer swirls (last row). However, $J=5$ and $J=6$ achieve equivalent perceptual accuracy, proving there is no need to add in larger scales than $J=5$. Indeed, the effective receptive field of the two layer synthesis with $J=5$ is equivalent to the single layer receptive field of $J_{\max} = 6$.
We also observe that using smaller scales gives more variety of the pattern arrangements, while large scale statistics have the potential to duplicate the reference image. 
\begin{figure}
    \centering
    \includegraphics[width = 0.95 \linewidth]{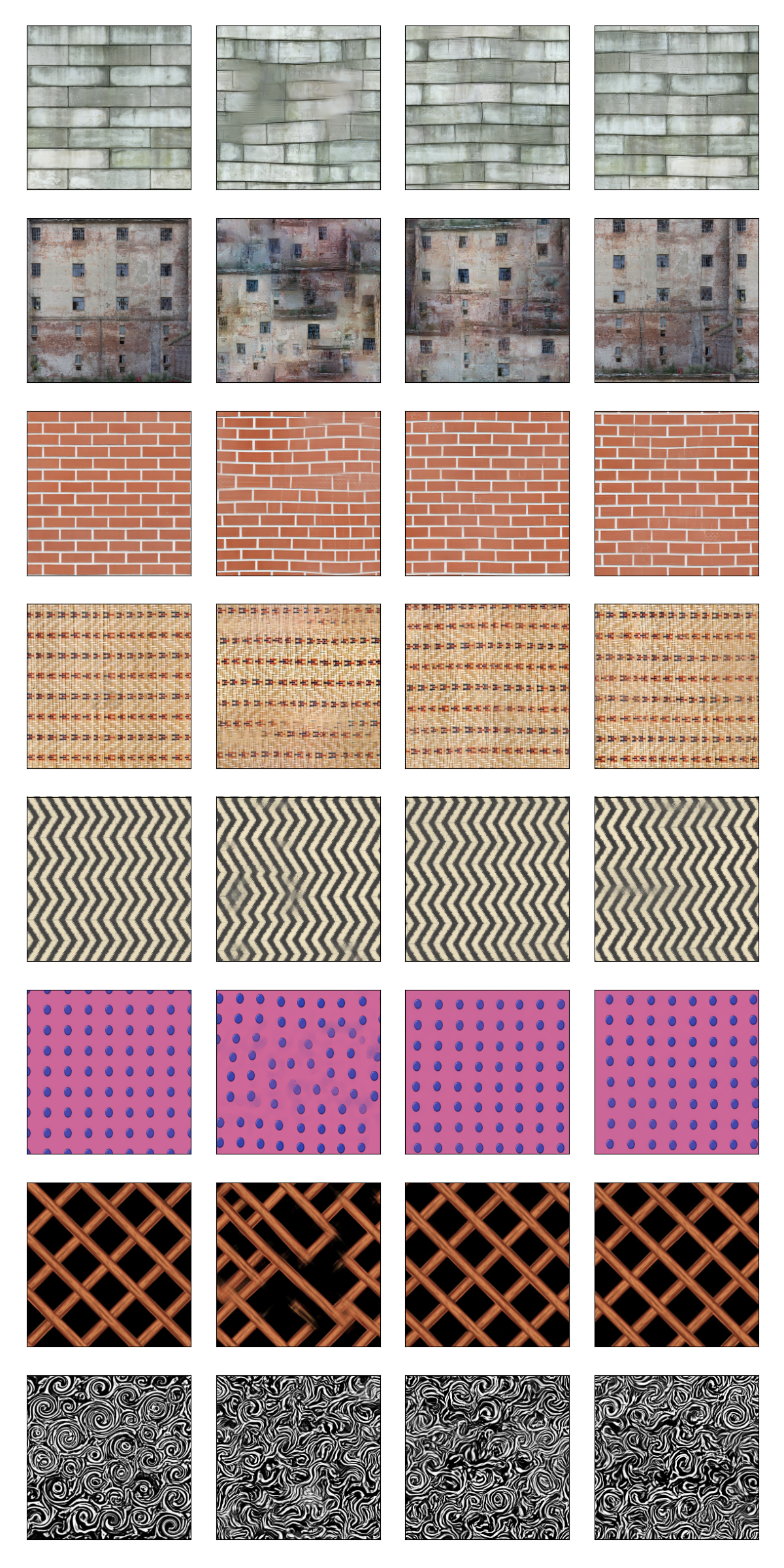}
    \caption{Synthesis results from two layers with different number of scales. \textbf{Left:} Original image. \textbf{Middle Left:} 2nd layer synthesis results with $J=4$. \textbf{Middle Right:} 2nd layer synthesis results with $J=5$. \textbf{Right:} 2nd layer synthesis results with $J=6$.}
    \label{fig:scales}
\end{figure}

\subsection{Layers analysis}
\label{sec: layers analysis}

For many texture images, the one-layer model can synthesize images of high quality. However for images with more complicated structures, multiple layers can provide a boost in visual quality. As discussed in \cite{scattering}, deeper layers decompose high frequency information that is aggregated into large frequency bins with a single wavelet transform. Figure \ref{fig:layers} shows images that achieved better quality with second layer statistics. For most images, the one-layer model captures general structures while the two-layer model refines the details, e.g., reproducing more accurate shapes, preserving long edges and swirls, fixing blurriness. 

\begin{figure}
    \centering
    \includegraphics[width = 0.75 \linewidth]{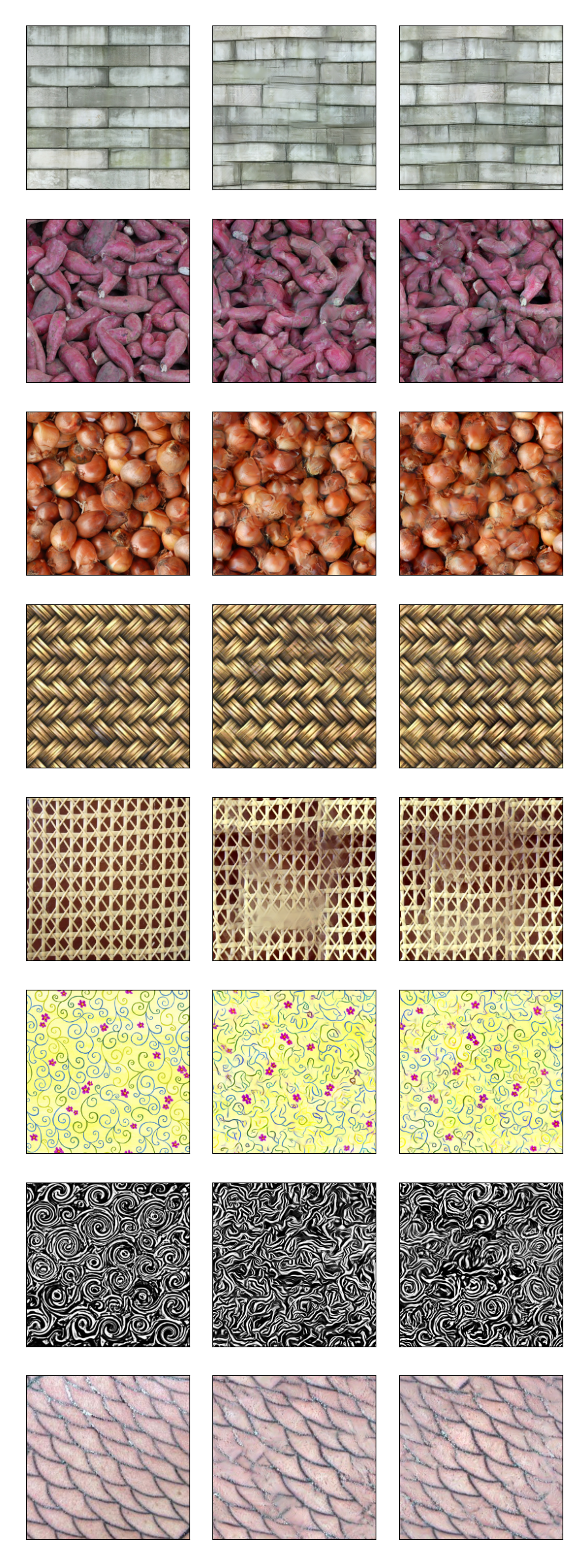}
    \caption{Synthesis results with one layer model and two layer model. \textbf{Left:} Original image. \textbf{Middle:} 1st layer synthesis results. \textbf{Right:} 2nd layer synthesis results. }
    \label{fig:layers}
\end{figure}

\subsection{Methods comparison}
\label{sec: methods comparison}

We use even and odd directional wavelets, along with omnidirectional wavelets as our pre-selected filters in our final model. We also set $J=5$ and use the two-layer model. Our results are compared with \cite{Portilla, vggsyn} in Figure \ref{fig:final}. 

The textures synthesized by our model are generally equivalent to, or superior than, the images synthesized by Portilla and Simoncelli \cite{Portilla}. In fact, while not depicted in Figure \ref{fig:final}, this result holds even if we restrict to one layer, indicating the combination of the ReLU and the selection of even, odd, and omnidirectional filters may provide a more complete statistical description of texture images.

With respect to Gatys \textit{et al.} \cite{vggsyn}, the results are more nuanced. Our results are generally superior for textures with long, rigid edges even though their model is much deeper than our model. Additionally, textures with rigid patterns, but not necessarily long straight lines (e.g., left, last row; right, rows five, eight, nine) also have visually more appealing synthesis results via our method. These results can be attributed to the use of multiscale filters, although, even then, the results of Section \ref{sec: max scale comparison} suggest that such an analysis might be too simplistic. For example, it is possible that a depth three wavelet network with $J=4$ might also achieve similar performance to our current implementation with two layers and $J=5$, which if so would raise questions with respect to other aspects of the VGG network.

Moving on we see that \cite{vggsyn} obtains superior performance for images with long, non-rigid curves, such as the pebbles and onions (left, rows three and nine), fireworks (left, row eight) and swirling type images (right, rows seven and ten). Nevertheless, these results are in line with the observations in Section \ref{sec: layers analysis}, which suggested that these types of textures require depth in order to capture their complex patterns. 

Other images exhibit more subtle differences. For example, in the cracked earth image (left, row four), the synthesized image of \cite{vggsyn} creates a bold effect on the cracks that is not present in the original.
In the crossing image (left, row ten), the background illumination pattern is only correctly preserved by our method. Lastly, the multi-colored dots (right, last row) have a cleaner background with \cite{vggsyn}, but our method creates dots with colors that are not present in the original image, thus showing greater variability. 
Additional numerical results can be found in Appendix \ref{sec: additional numerical results}.

\begin{figure*}
    \centering
    \includegraphics[width = 0.45 \linewidth]{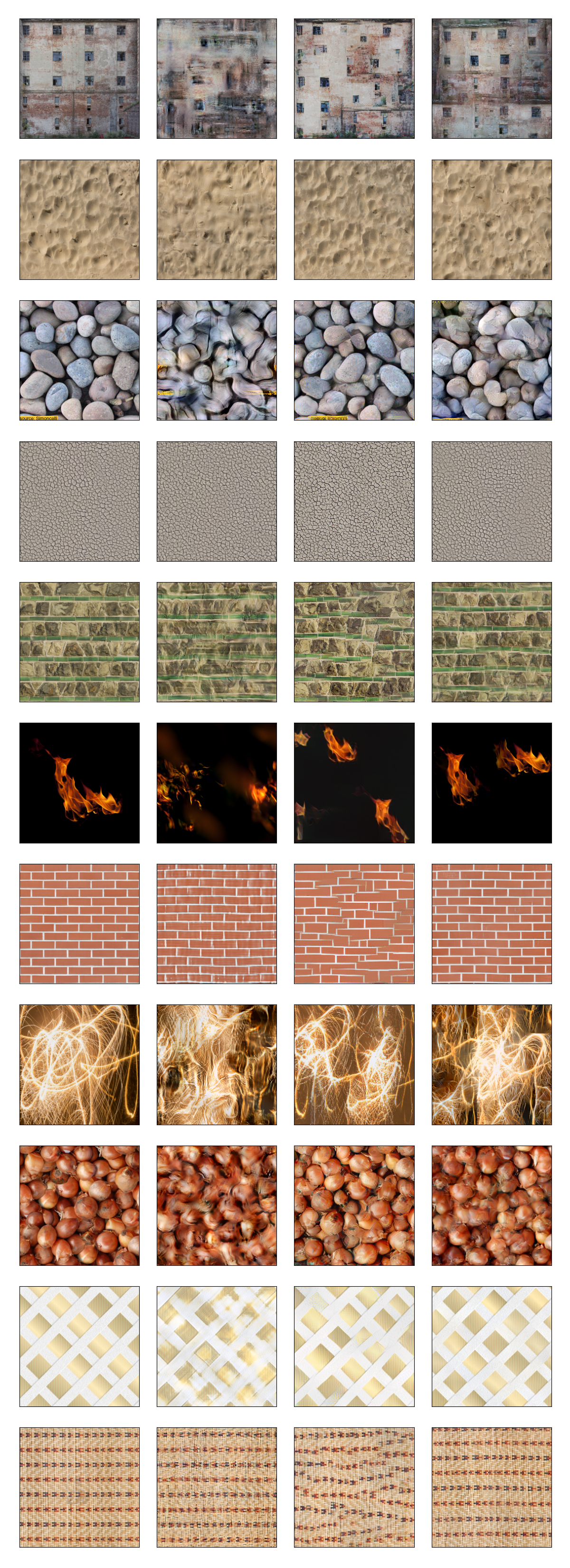}
    \includegraphics[width = 0.45 \linewidth]{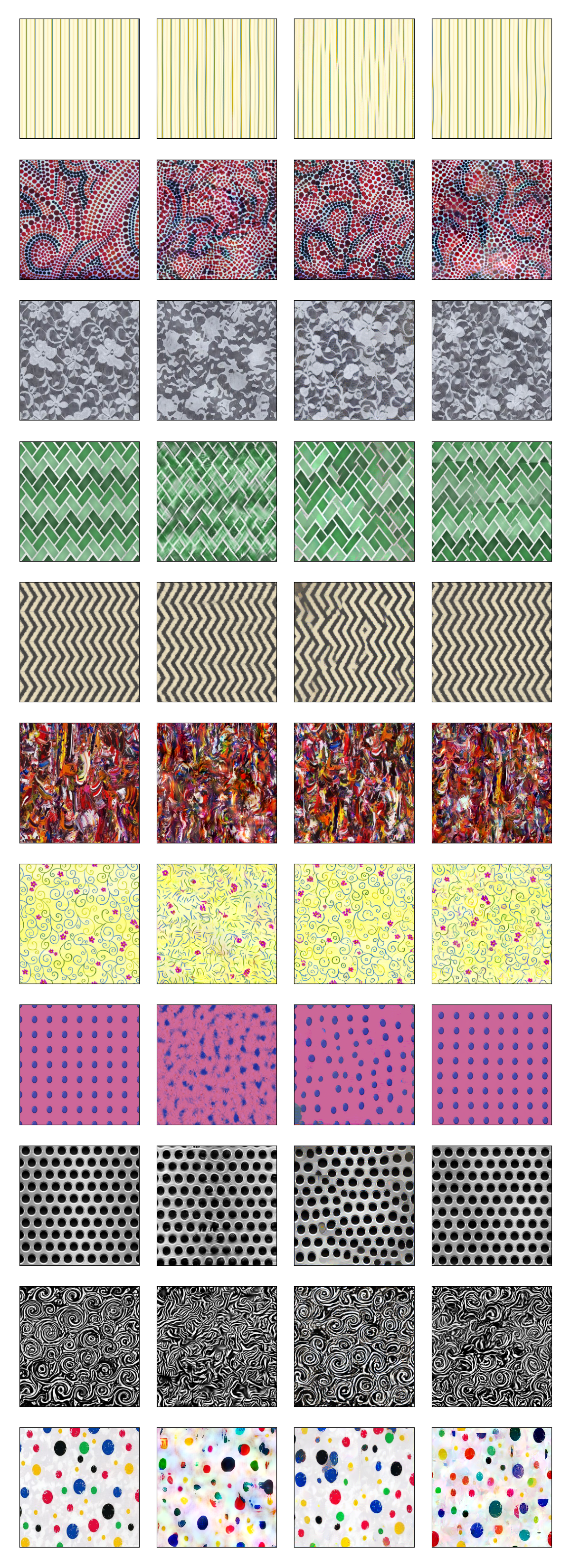}
    \caption{Synthesis results compared to other models. \textbf{Left:} Original images. \textbf{Middle Left:} Results from Portilla and Simoncelli \cite{Portilla}. \textbf{Middle Right: } Results from Gatys \textit{et al.} \cite{vggsyn}. \textbf{Right:} Results from our two layer model.}
    \label{fig:final}
\end{figure*}

\section{Conclusion}
\label{sec: conclusion}

We presented a unique texture synthesis algorithm that melds aspects of Portilla and Simoncelli \cite{Portilla}, Gatys \textit{et al.} \cite{vggsyn}, and Mallat \cite{scattering}, while also incorporating new ideas on filter design, multi-layer structure, and the invertibility of CNNs. Our numerical analysis provides insight into the workings of statistics-based texture synthesis algorithms. Synthesized textures are competitive with the state-of-the-art and in some cases superior to \cite{vggsyn}, thus providing a potential alternative. Nevertheless, issues such as the trade-off between network depth and filter scale are not fully resolved, and invite future research endeavors.

\appendices

\section{Proofs of theorems}
\label{sec: proofs of theorems}

In this section we prove Theorems 1 and 2 from the main paper. 

\subsection{Analysis with wavelets on Diracs and jump functions}
\label{sec: analysis of jump discontinuity and dirac}

\thmMorlet*

Figure \ref{fig:1d_wavelets} shows examples of the wavelets and corresponding Fourier transforms. 

\begin{figure}[ht]
    \centering
    \includegraphics[width = 0.95 \linewidth]{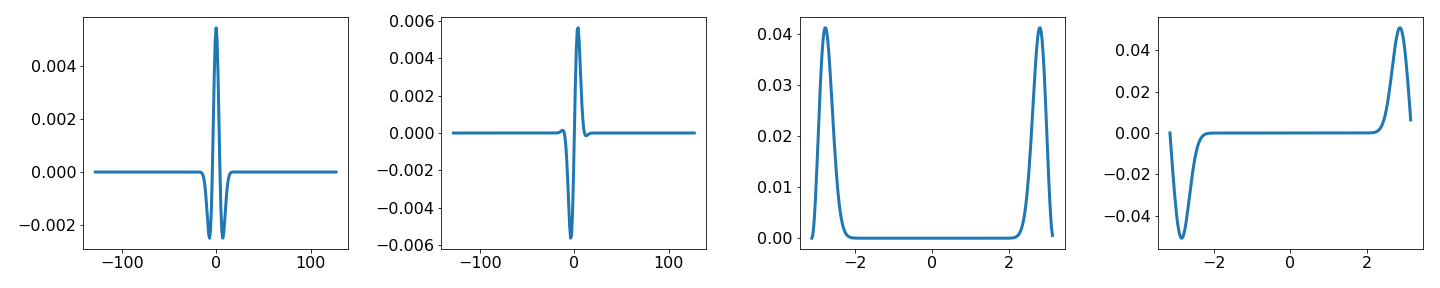}
    \caption{From left to right: 1D even wavelet, 1D odd wavelet, FFT (real part) of even wavelet, FFT (imagery part) of odd wavelet.}
    \label{fig:1d_wavelets}
\end{figure}

\begin{lemma}
\label{lemma: odd function}
Let $f(u)$ be an odd function, we have 
$$\int_{\R} \sigma(\gamma \cdot f) \sigma(\gamma' \cdot f) = \int_{\R} \sigma(-\gamma \cdot f) \sigma(-\gamma' \cdot f)$$
for $\gamma, \gamma' \in \{-1,+1\}$.
\end{lemma}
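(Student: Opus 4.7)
The plan is to establish the identity by a reflection change of variables $u \mapsto -u$, exploiting the odd symmetry of $f$ to flip the sign inside $\sigma$. First I would record the pointwise consequence of the hypothesis: since $f(-u) = -f(u)$, for every sign $\gamma \in \{-1,+1\}$ we have $\sigma(\gamma \cdot f(-u)) = \sigma(-\gamma \cdot f(u))$. This is the only structural fact I need; from here the argument is purely a substitution.

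Next I would start from the right-hand side, $\int_{\R} \sigma(-\gamma \cdot f(u)) \sigma(-\gamma' \cdot f(u))\, du$, and substitute $v = -u$. Because $|du/dv| = 1$ and the change reverses the orientation of $\R$, the two sign changes cancel, and the integral becomes $\int_{\R} \sigma(-\gamma \cdot f(-v)) \sigma(-\gamma' \cdot f(-v))\, dv$. Applying the oddness of $f$ inside each $\sigma$ converts $\sigma(-\gamma \cdot f(-v))$ into $\sigma(\gamma \cdot f(v))$ (and similarly for $\gamma'$), yielding $\int_{\R} \sigma(\gamma \cdot f(v)) \sigma(\gamma' \cdot f(v))\, dv$, which is the left-hand side.

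The only point requiring care is that $\sigma$ is nonlinear, so one cannot simply pull the sign factor out of $\sigma(\gamma \cdot f)$ as if it were multiplicative; the argument must route the sign flip through the variable $u$ rather than through $\sigma$ itself. Aside from this bookkeeping, the proof is a direct one-line substitution and I do not anticipate any real obstacle. The statement should then follow immediately and can be invoked in the proof of Theorem~\ref{thm: necessity of morlet} to handle the odd-wavelet case on the Dirac $y_1$ (whose convolution with $\tpsi^o$ is an odd function) and the even-wavelet case on the step $y_2$ (whose convolution with $\tpsi^e$ differs from an odd function by a constant that vanishes upon the reflection symmetry).
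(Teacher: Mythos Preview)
Your argument is correct: the substitution $u\mapsto -u$ together with $f(-u)=-f(u)$ immediately gives the claimed identity, and your caution about not pulling signs through the nonlinear $\sigma$ is well placed. The paper in fact states this lemma without proof, so your approach is exactly the intended (and essentially only) one; your side remark about $y_2\ast\tpsi^e$ being ``an odd function plus a constant'' can be sharpened---since a wavelet has zero mean, $\int_{-\infty}^{u}\tpsi^e$ is genuinely odd, which is precisely the content of the paper's Remark~1.
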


\subsubsection{Diracs}

In this section we prove Theorem \ref{thm: necessity of morlet} for the 1D Dirac function $y_1(t)$. First the wavelet transform of $y_1$ is:
\begin{equation*}
    y_1 \ast \psi^o (u) = \psi^o (u), \quad y_1 \ast \psi^e (u) = \psi^e (u)
\end{equation*}
Figure \ref{fig:diracs} shows the wavelet transforms for $y_1$ and $-y_1$. Since $\psi^o(u)$ is an odd function, with Lemma \ref{lemma: odd function} we have $\cov_{y_1}^o = \cov_{-y_1}^o$. However since $\psi^e(u)$ is an even function, generally we have $\cov_{y_1}^{e} (+1, +1) \neq \cov_{-y_1}^{e} (+1, +1)$. Therefore $\cov_{y_1}^e \neq \cov_{-y_1}^e$. Figure \ref{fig:dirac_cov} verifies this conclusion numerically. 

\begin{figure}[ht]
    \centering
    \includegraphics[width = 0.95 \linewidth]{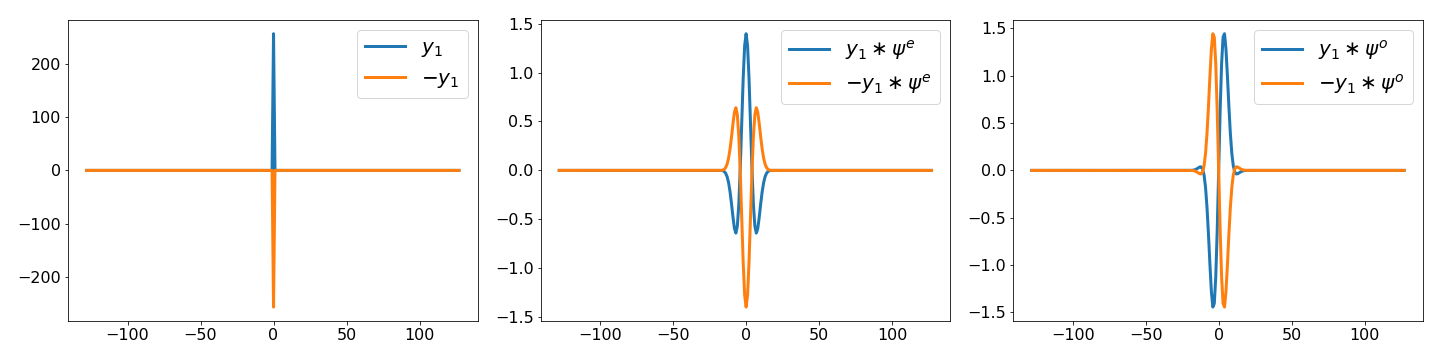}
    \caption{\textbf{Left}: Two Dirac functions $y_1$ and $-y_1$. \textbf{Middle}: Wavelet coefficients with the even wavelet. \textbf{Right}: Wavelet coefficients with the odd wavelet. }
    \label{fig:diracs}
\end{figure}

\begin{figure}[ht]
    \centering
    \includegraphics[width = 0.95 \linewidth]{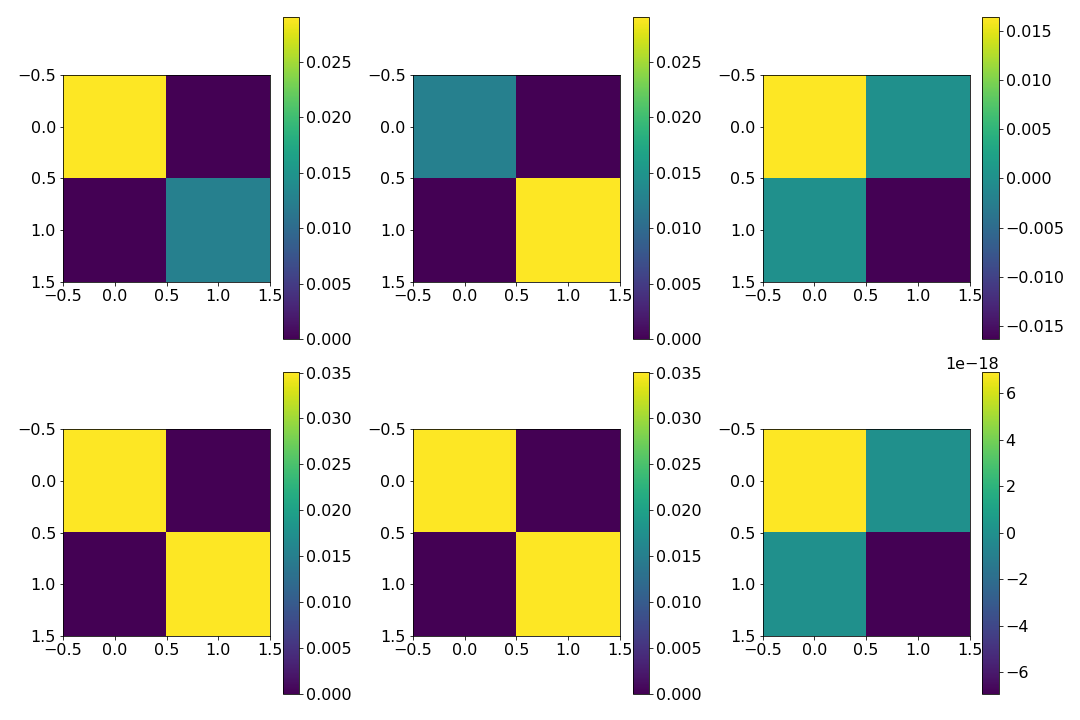}
    \caption{Upper row from left to right: $\cov_{y_1}^e$, $\cov_{-y_1}^e$, $\cov_{y_1}^e - \cov_{-y_1}^e$.  Lower row from left to right: $\cov_{y_1}^o$, $\cov_{-y_1}^o$, $\cov_{y_1}^o - \cov_{-y_1}^o$. This numerically verified that even wavelet is able to distinguish the two dirac functions from the covariance statistics while odd wavelet cannot. }
    \label{fig:dirac_cov}
\end{figure}

\subsubsection{Jump discontinuity}

In this section we prove Theorem \ref{thm: necessity of morlet} for the jump function $y_2(t)$. The wavelet transforms satisfy:
\begin{equation}
    y_2 \ast \psi^{\beta} (u) = \int_{\R} y_2(u-t)\psi^{\beta}(t) dt = \int_{-\infty}^{u} \psi^{\beta}(t) dt
\end{equation}
for $\beta \in \{e, o\}$. Figure \ref{fig:jumps} illustrates the convolution of the even and odd wavelet with $y_2$ and $-y_2$.

\begin{remark}
Since $\psi^e$ is an integrable even function, then $f^e(u) = \int_{-\infty}^{u} \psi^e(t) dt$ is an odd function.
\end{remark}

\begin{remark}
Since $\psi^o$ is an integrable odd function, then $f^o(u) = \int_{-\infty}^{u} \psi^o(t) dt$ is an even function.
\end{remark}
Therefore, with Lemma \ref{lemma: odd function} we have $\cov_{y_2}^e = \cov_{-y_2}^e$. Generally we also have $\cov_{y_2}^o \neq \cov_{-y_2}^o$. 
Figure \ref{fig:jumps_cov} gives the numerical verification. 

\begin{figure}
    \centering
    \includegraphics[width = 0.95 \linewidth]{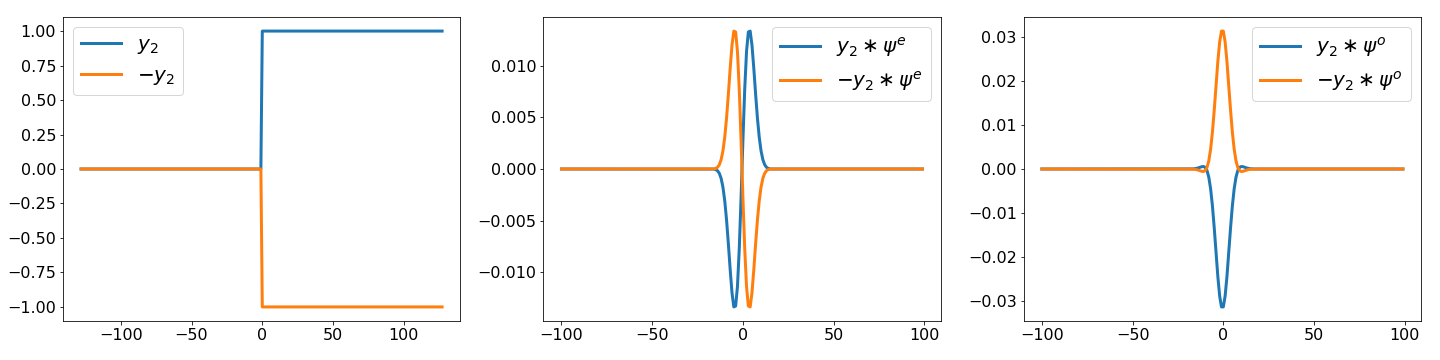}
    \caption{\textbf{Left}: Two jump functions $y_2$ and $-y_2$. \textbf{Middle}: Wavelet coefficients with the even wavelet. \textbf{Right}: Wavelet coefficients with the odd wavelet. }
    \label{fig:jumps}
\end{figure}

\begin{figure}
    \centering
    \includegraphics[width = 0.95 \linewidth]{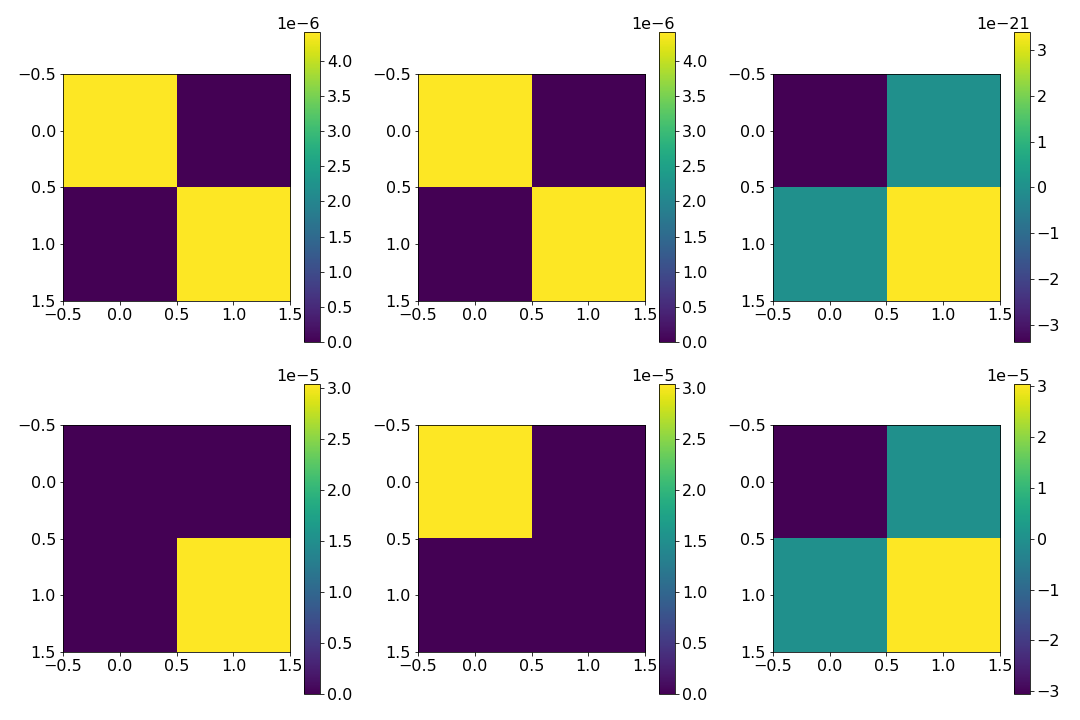}
    \caption{Upper row from left to right: $\cov_{y_2}^e$, $\cov_{-y_2}^e$, $\cov_{y_2}^e - \cov_{-y_2}^e$.  Lower row from left to right: $\cov_{y_2}^o$, $\cov_{-y_2}^o$, $\cov_{y_2}^o - \cov_{-y_2}^o$. This numerically verified that odd wavelet is able to distinguish the two jump functions from the covariance statistics while even wavelet cannot. }
    \label{fig:jumps_cov}
\end{figure}

\subsection{Invertibility of 2nd layer}
\label{sec: invertibility of second layer}

Now we prove Theorem \ref{thm: 2nd layer inverse}. We will need the following lemma.

\begin{lemma} \label{lemma: LP frame}
If $\{ \phi_J, h, \psi_{j, \theta}^o : 0 \leq j < J, \enspace \theta \in \Theta_M \}$ forms a frame and if $\widehat{g}$ is non-negative, radial, and a decreasing function of $|\omega|$, then $\{ \phi_J, h, \sum_{j=0}^{J-1} \psi_{j, \theta}^o : \enspace \theta \in \Theta_M \}$ also forms a frame. 
\end{lemma}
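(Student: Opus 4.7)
The plan is to reduce the frame condition on the aggregated filters to the assumed frame condition on the individual wavelets by carefully analyzing the Fourier transforms of the odd directional wavelets. The key observation is that for fixed angle $\theta$ and frequency $\omega$, the Fourier values $\widehat{\psi_{j,\theta}^o}(\omega)$ across different scales $j$ all point in the same direction in $\mathbb{C}$, so no cancellation occurs when summing over $j$. The upper frame bound is then immediate from Cauchy--Schwarz, while the lower frame bound follows from the fact that when all summands are co-signed, the squared magnitude of the sum dominates the sum of squared magnitudes.

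First I would compute the Fourier transform of the prototype odd wavelet. Since $\psi^o(u) = g(u) \sin(\xi \cdot u)$ and $g$ is even, a direct computation yields
\begin{equation*}
    \widehat{\psi^o}(\omega) = \tfrac{1}{2i}\bigl[\widehat{g}(\omega - \xi) - \widehat{g}(\omega + \xi)\bigr] \, ,
\end{equation*}
which is purely imaginary because $\widehat{g}$ is real. Writing $\widehat{\psi^o}(\omega) = i \, r(\omega)$ with $r(\omega) := \tfrac{1}{2}[\widehat{g}(\omega + \xi) - \widehat{g}(\omega - \xi)]$, the assumption that $\widehat{g}$ is non-negative, radial, and strictly decreasing in $|\omega|$ implies that the sign of $r(\omega)$ is controlled by $|\omega + \xi|$ versus $|\omega - \xi|$, i.e., by $\mathrm{sign}(-\omega \cdot \xi)$. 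Applying the dilation and rotation identities $\widehat{\psi_{j,\theta}^o}(\omega) = \widehat{\psi^o}(2^j R_\theta^{-1}\omega)$, the sign of the corresponding $r(2^j R_\theta^{-1}\omega)$ equals $\mathrm{sign}(-R_\theta^{-1}\omega \cdot \xi)$, which is independent of $j$ since $2^j > 0$.

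With the sign analysis in hand, the lower frame bound follows by observing that for fixed $\theta$ and $\omega$, the quantities $\{\widehat{\psi_{j,\theta}^o}(\omega)\}_{j=0}^{J-1}$ are all purely imaginary with a common sign. Hence
\begin{equation*}
    \Bigl| \sum_{j=0}^{J-1} \widehat{\psi_{j,\theta}^o}(\omega) \Bigr|^2 = \Bigl( \sum_{j=0}^{J-1} \bigl|\widehat{\psi_{j,\theta}^o}(\omega)\bigr| \Bigr)^2 \geq \sum_{j=0}^{J-1} \bigl|\widehat{\psi_{j,\theta}^o}(\omega)\bigr|^2 \, .
\end{equation*}
Summing this inequality over $\theta \in \Theta_M$ and adding $|\widehat{\phi_J}(\omega)|^2 + |\widehat{h}(\omega)|^2$ on both sides, the right-hand side is at least the original lower frame bound $A$ by assumption. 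For the upper bound, Cauchy--Schwarz in the $j$ index gives $|\sum_j \widehat{\psi_{j,\theta}^o}(\omega)|^2 \leq J \sum_j |\widehat{\psi_{j,\theta}^o}(\omega)|^2$, so summing over $\theta$ produces an upper bound of the form $\max(1, J) \cdot B$.

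The main obstacle, and the single non-routine step, is the sign analysis: showing that the purely imaginary Fourier values of the rescaled odd wavelets align in sign across all scales $j$ for each fixed $\theta$ and $\omega$. This is exactly where the radial-monotone-decreasing assumption on $\widehat{g}$ is essential; without it, $\widehat{\psi_{j,\theta}^o}(\omega)$ could alternate sign across scales and the constructive summation would fail. Once this alignment is established, the remainder of the argument is a routine combination of the trivial inequality $(\sum |a_j|)^2 \geq \sum |a_j|^2$ for non-negative reals, Cauchy--Schwarz, and the assumed frame bounds on the original family.
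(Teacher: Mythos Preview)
Your proposal is correct and follows essentially the same approach as the paper's proof. Both arguments compute $\widehat{\psi^o}(\omega)=\tfrac{1}{2i}\bigl[\widehat{g}(\omega-\xi)-\widehat{g}(\omega+\xi)\bigr]$, observe that the radial monotonicity of $\widehat{g}$ forces the (purely imaginary) values $\widehat{\psi_{j,\theta}^o}(\omega)$ to share a common sign across all scales $j$ for each fixed $(\theta,\omega)$, and then invoke $(\sum_j a_j)^2\ge\sum_j a_j^2$ for co-signed reals to transfer the lower frame bound; your presentation is slightly more explicit in reducing the sign question to $\mathrm{sign}(-R_\theta^{-1}\omega\cdot\xi)$ and in giving a quantitative upper bound $JB$ via Cauchy--Schwarz, whereas the paper simply notes finiteness for the upper bound.
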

\begin{proof}
With the definition of frame, we know there exist two constants $0 < A \leq B < \infty$ such that:
\begin{align*}
    & A \leq |\widehat{\phi}_J(\omega)|^2 + |\widehat{h}(\omega)|^2 + \sum_{j, \theta} |\widehat{\psi}_{j, \theta}^o(\omega)|^2 \leq B, \\
    & \forall \, \omega \in \Omega \cap [-\pi, \pi]^2 \, .
\end{align*}
we need to prove there exist two constants $0 < A' \leq B' < \infty$ such that:
\begin{align*}
    & A' \leq |\widehat{\phi}_J(\omega)|^2 + |\widehat{h}(\omega)|^2 + \sum_{\theta} |\sum_j \widehat{\psi}_{j, \theta}^o(\omega)|^2 \leq B', \\
    & \forall \, \omega \in \Omega \cap [-\pi, \pi]^2 \, .
\end{align*}
The upper bound always exists as long as we have a finite number of filters. Therefore we only prove the lower bound. The key point is to prove:
\begin{equation}
\label{eqn: sum of odd}
\sum_j |\widehat{\psi}_{j, \theta}^o(\omega)|^2 \leq |\sum_j \widehat{\psi}_{j, \theta}^o(\omega)|^2, \forall \theta \in \Theta_M 
\end{equation}
Without loss of generosity, we set $\theta = 0$ and omit this notation in the following proof. Recall the odd directional wavelet $\psi^o(u) = g(u) \sin (\xi \cdot u)$, which has Fourier transform:
\begin{equation}
\label{eqn: odd fft}
\begin{aligned}
    \widehat{\psi}^o(\omega) &= \frac{\widehat{g}_{\sigma}(\omega - \xi) - \widehat{g}_{\sigma}(\omega + \xi)}{2i}
\end{aligned}
\end{equation}
Bringing equation \eqref{eqn: odd fft} into equation \eqref{eqn: sum of odd}, we need to prove:
\begin{equation}
\label{eqn: odd sum g}
\begin{aligned}
    & \sum_j |\widehat{g}_{\sigma, j}(\omega - \xi) - \widehat{g}_{\sigma, j}(\omega + \xi)|^2 \\
    \leq &|\sum_j \widehat{g}_{\sigma, j}(\omega - \xi) - \widehat{g}_{\sigma, j}(\omega + \xi)|^2
\end{aligned}
\end{equation}
If $\widehat{g}$ is non-negative, radial, and a decreasing function of $|\omega|$, one can prove: 
\begin{itemize}
    \item If $|\omega - \xi| < |\omega + \xi|$, then $\widehat{g}_{\sigma}(\omega - \xi) - \widehat{g}_{\sigma}(\omega + \xi) \geq 0$. For any such $\omega$ we also have $|2^{-j}\omega - \xi| < |2^{-j}\omega + \xi|$, and $\widehat{g}_{\sigma}(2^{-j}\omega - \xi) - \widehat{g}_{\sigma}(2^{-j}\omega + \xi) \geq 0$.
    
    \item If $|\omega - \xi| > |\omega + \xi|$, then $\widehat{g}_{\sigma}(\omega - \xi) - \widehat{g}_{\sigma}(\omega + \xi) \leq 0$. For any such $\omega$ we also have $|2^{-j}\omega - \xi| > |2^{-j}\omega + \xi|$, and $\widehat{g}_{\sigma}(2^{-j}\omega - \xi) - \widehat{g}_{\sigma}(2^{-j}\omega + \xi) \leq 0$.
    
    \item If $|\omega - \xi| = |\omega + \xi|$, then $\widehat{g}_{\sigma}(\omega - \xi) - \widehat{g}_{\sigma}(\omega + \xi) = 0$. For any such $\omega$ we also have $|2^{-j}\omega - \xi| = |2^{-j}\omega + \xi|$, and $\widehat{g}_{\sigma}(2^{-j}\omega - \xi) - \widehat{g}_{\sigma}(2^{-j}\omega + \xi) = 0$.
\end{itemize}
Then for all $\omega$, $\widehat{g}_{\sigma}(\omega - \xi) - \widehat{g}_{\sigma}(\omega + \xi)$ and $\widehat{g}_{\sigma, j}(\omega - \xi) - \widehat{g}_{\sigma, j}(\omega + \xi)$ have the same sign for all $j$ (either non-positive or non-negative). One can prove: $(\sum_i a_i)^2 \geq \sum_i a_i^2$ if all the $a_i$ are non-negative or non-positive. Therefore, equation \eqref{eqn: odd sum g} is true, and $|\sum_j \widehat{\psi}_{j,\theta}^o (\omega)|^2 \geq \sum_j |\widehat{\psi}_{j,\theta}^o (\omega)|^2$ for all $\theta$. The lemma is proved. 
\end{proof}

\thmsecondlayerinverse*

\begin{proof}
If $\{ \phi_J, h, \psi_{j, \theta}^o : 0 \leq j < J, \enspace \theta \in \Theta_M \}$ forms a frame, then 
\begin{align*}
    & \{ \phi_J \, , \, h \, , \,\psi_{j, \alpha}^{\beta} : 0 \leq j < J, (\alpha, \beta) \in \{ (\theta, e), (\theta, o), (\ell, p) \}, \\ 
    &\enspace \theta \in \Theta_M, \enspace 0 \leq \ell < L \} \, .
\end{align*}
also forms a frame, i.e., 
\begin{equation}
    x = x \ast \phi_J \ast \widetilde{\phi_J} + x \ast h \ast \widetilde{h} + \sum_{j=0}^{J-1} \sum_{\alpha, \beta} x \ast \psi_{j, \alpha}^{\beta} \ast \widetilde{\psi_{j, \alpha}^{\beta}}, \forall x
\end{equation}
Therefore we can reconstruct $\sum_{j=0}^{J-1} \sigma (\gamma \cdot x \ast \psi_{j, \alpha}^{\beta})$ from $U_J^2x$. With $t = \sigma(t) - \sigma(-t)$, we are able to get $\sum_{j=0}^{J-1} x \ast \psi_{j, \alpha}^{\beta}$, which can also be written as $x \ast \sum_{j=0}^{J-1}\psi_{j, \alpha}^{\beta}$. At this point, we have the following updated responses
\begin{align*}
    \{ x \ast \phi_J, x \ast h, x \ast \sum_{j=0}^{J-1} \psi_{j, \alpha}^{\beta}, (\alpha, \beta) \in \{ (\theta, e), (\theta, o), (\ell, p) \} \}
\end{align*}
With Lemma \ref{lemma: LP frame}, $\{ \phi_J, h, \sum_{j=0}^{J-1} \psi_{j, \theta}^o\}$ forms a frame, then $\{ \phi_J, h, \sum_{j=0}^{J-1} \psi_{j, \alpha}^{\beta}\}$ also forms a frame. Therefore we can reconstruct the image $x$ from the above responses: 
\begin{equation}
    x = x \ast \phi_J \ast \widetilde{\phi_J} + x \ast h \ast \widetilde{h} + \sum_{\alpha, \beta} x \ast \sum_{j=0}^{J-1} \psi_{j, \alpha}^{\beta} \ast \widetilde{\sum_{j=0}^{J-1} \psi_{j, \alpha}^{\beta}}
\end{equation}
\end{proof}

\section{Implementation details}
\label{sec: implemetation details}

\subsection{Wavelets}

\begin{figure}
    \centering
    \includegraphics[width = 0.45 \linewidth]{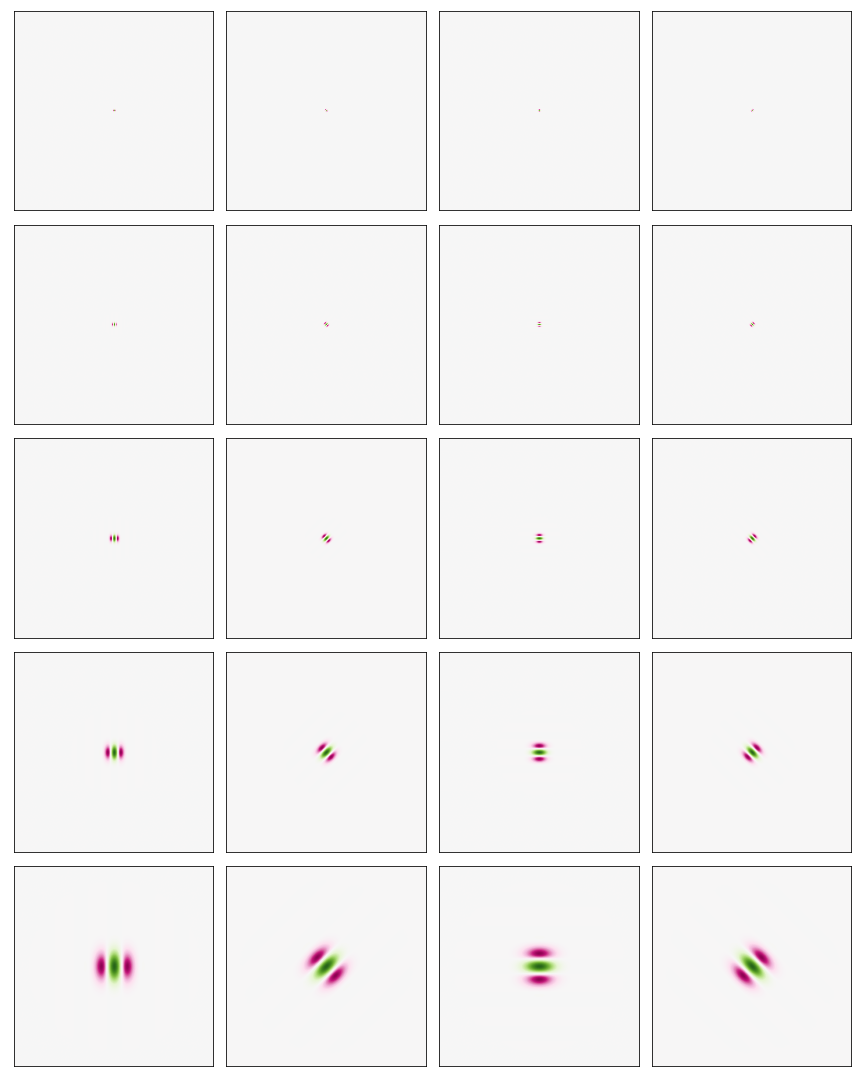}
    \includegraphics[width = 0.45 \linewidth]{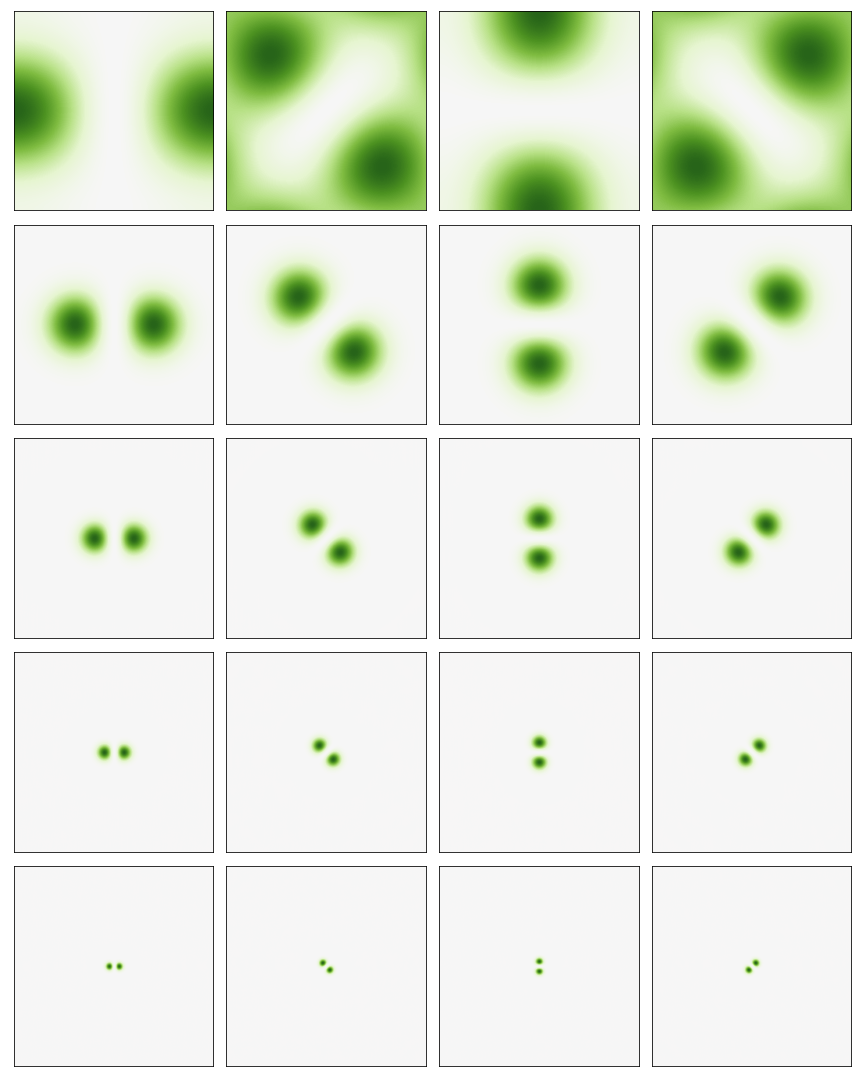}
    
    \includegraphics[width = 0.45 \linewidth]{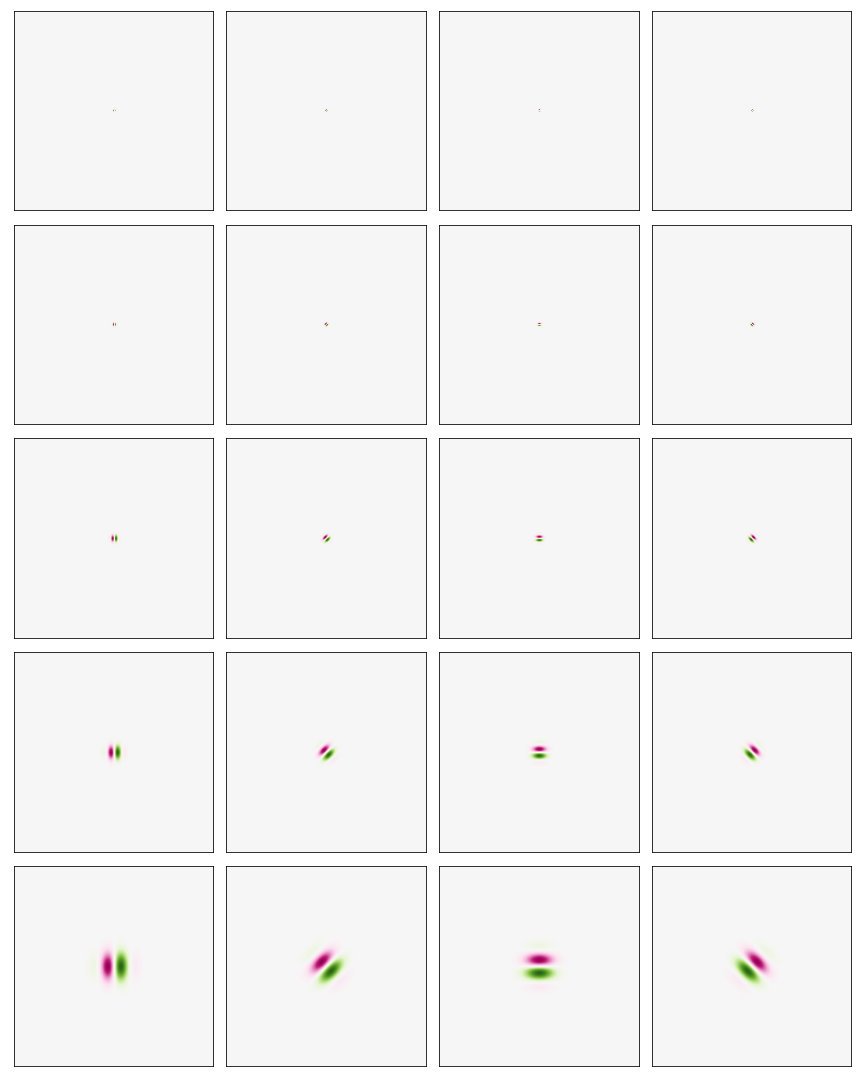}
    \includegraphics[width = 0.45 \linewidth]{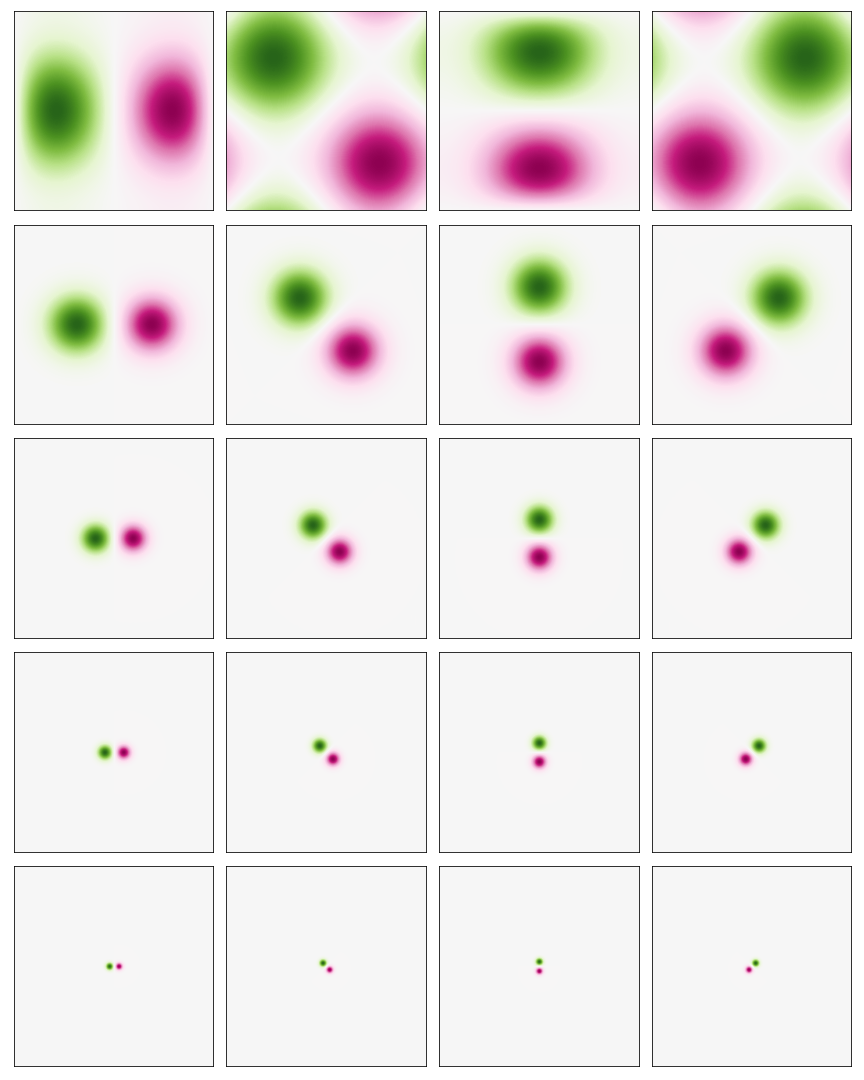}

    \includegraphics[width = 0.45 \linewidth]{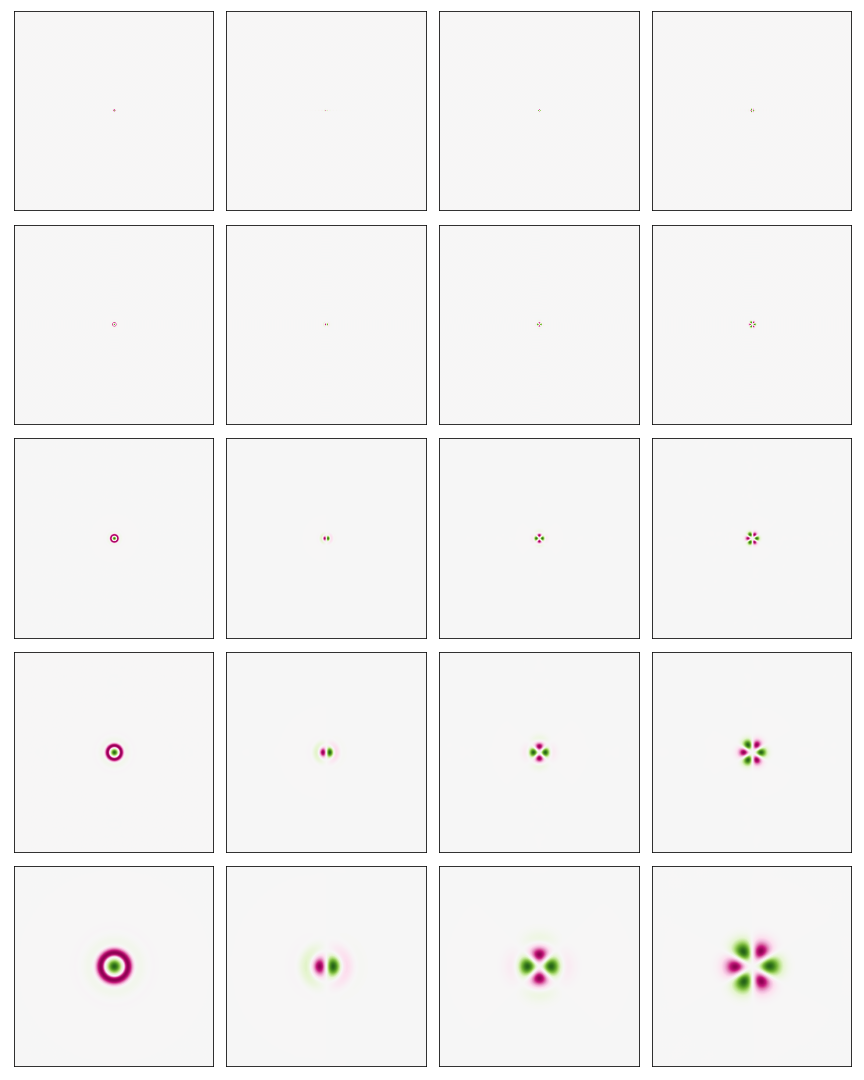}
    \includegraphics[width = 0.45 \linewidth]{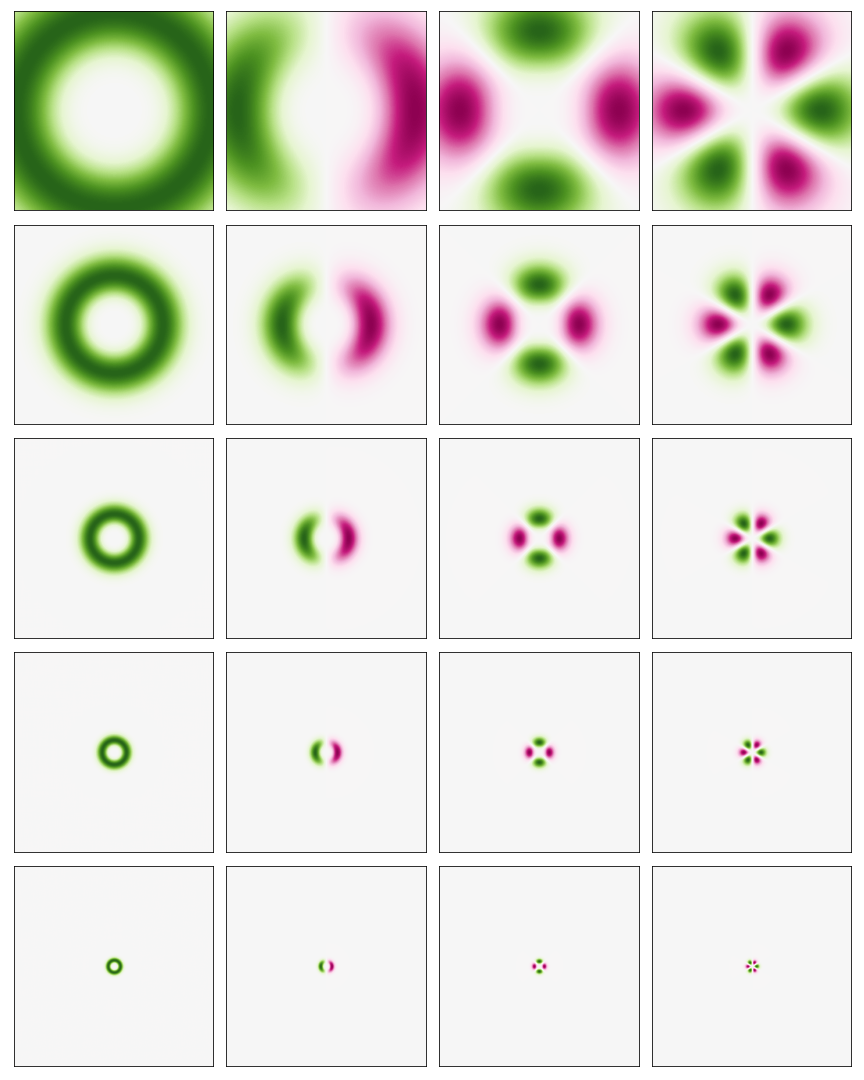}
    \caption{\textbf{Upper}: Even directional wavelets in space and frequency (FFT). \textbf{Middle}: Odd directional wavelets in space and frequency (FFT). \textbf{Lower}:  Omnidirectional wavelets in space and frequency (FFT). Each block shows the wavelet family with different scales and oscillations. }
    \label{fig: wavelets}
\end{figure}

We show the wavelet families in Figure \ref{fig: wavelets}.  Recall:
\begin{align*}
    \psi^e (u) &:= g(u) \cos (\xi \cdot u) \, , \\
    \psi^o (u) &:= g(u) \sin (\xi \cdot u) \, , \\
    \psi_{\ell}^p (u) &:= a_{\ell}(u) \cos (\ell \varphi) \, ,
\end{align*}
In particular for directional wavelets, we use $g = g_\sigma$ as a gaussian function with variance $\sigma^2$. 

For the three wavelets, they all have local support both in space and frequency. As $j$ increase from 0 to $J-1$ (from top to bottom in each block), the wavelet has larger support in space and smaller support in frequency. For directional wavelets, the wavelet support varies in directions with rotations (from left to right in each block) to capture directional oscillations in images. For omnidirectional wavelets, the wavelet either oscillates radially or angularly or both. The total number of oscillations is fixed across the four wavelets. With all of the wavelets described above, along with the low pass and high pass, a frame is formed and the whole frequency field is covered. 

\subsection{Reduction of second layer statistics}
Recall at the second layer we compute:
\begin{align*}
    U_J^2 &x := \Bigg\{ U_J^1 \left( \sum_{j=0}^{J-1} \sigma (\gamma \cdot x \ast \psi_{j, \alpha}^{\beta}) \right) : \\
    &(\alpha, \beta) \in \{ (\theta, e), (\theta, o), (\ell, p) \}, \enspace \theta \in \Theta_M, \enspace 0 \leq \ell < L \Bigg\} \, .
\end{align*}
that is:
\begin{align*}
    U_J^2 x := \Bigg\{ &\sum_{j_1=0}^{J-1} \sigma (\gamma_1 \cdot x \ast \psi_{j_1, \alpha_1}^{\beta_1})  \ast \phi_J, \\
    & \sum_{j_1=0}^{J-1} \sigma (\gamma_1 \cdot x \ast \psi_{j_1, \alpha_1}^{\beta_1})  \ast h, \\
    & \sigma \left( \gamma_2 \cdot \left( \sum_{j_1=0}^{J-1} \sigma (\gamma_1 \cdot x \ast \psi_{j_1, \alpha_1}^{\beta_1}) \right) \ast \psi_{j_2, \alpha_2}^{\beta_2} \right): \\
    &(\alpha_1, \beta_1), (\alpha_2, \beta_2) \in \{ (\theta, e), (\theta, o), (\ell, p) \}, \\
    & \gamma_1, \gamma_2 \in \{ +1, -1 \} \Bigg\} \, .
\end{align*}
In particular for the third item, we apply another layer of the wavelet transform to the first layer responses. In numerical experiments for directional wavelets, we find $\sum_{j_1=0}^{J-1} \sigma (\gamma_1 \cdot x \ast \psi_{j_1, \theta_1}^{\beta_1})$ is essentially supported at the direction $\theta = \theta_1$. Therefore to reduce the number of total statistics and save computation, we set $\theta_2 = \theta_1$, i.e., the second layer of directional wavelets has the same direction as the first layer directional responses. We also add a residual wavelet to keep track of the residual frequencies and match the variance. We numerically verified with this restriction, there is little loss in the image quality. 

\subsection{Matching of second layer statistics}

When we use the second layer statistics to synthesize texture images, we initialize the image with the first layer result. This is also equivalent to matching the first layer statistics until convergence, then matching both first layer and second layer statistics. In practice, we see the synthesized image has better quality than matching both first and second layer statistics from noise, i.e., matching both from the beginning. 

Figure \ref{fig: noise vs 1st layer} compares the synthesized images using these two strategies. In particular, the middle right column shows synthesized images initialized from first layer result and the right column shows synthesized images initialized from noise. We notice matching only first layer statistics significantly reduces the first layer loss and the learned images are already well structured. Initialized from such images, second layer statistics refine the details. On the other hand, initializing from noise fails to reduce the first layer loss to a very small value. The learned images generally lose large structures and are not as good as results from the other strategy.

\begin{figure}
    \centering
    \includegraphics[width = 0.95 \linewidth]{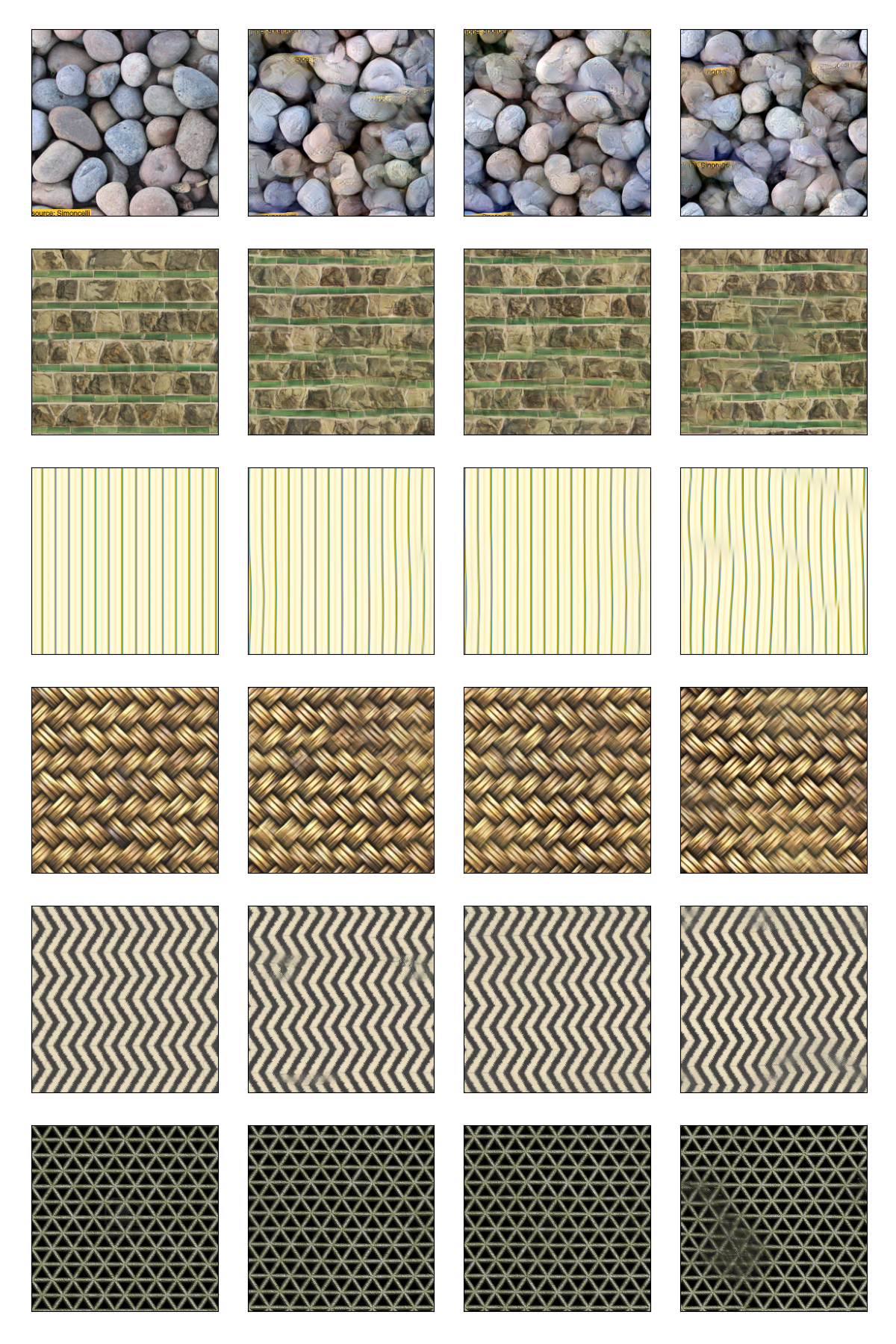}
    \caption{\textbf{Left:} Original images. \textbf{Middle left:} Images synthesized from first layer. \textbf{Middle right:} Images synthesized from second layer, initialized from first layer result. \textbf{Right: } Images synthesized from second layer, initialized from uniform noise. }
    \label{fig: noise vs 1st layer}
\end{figure}

\subsection{Analysis of number of iterations}

In this section we discuss the relationship among the number of iterations, loss, and image quality. Figures \ref{fig: images 1}, \ref{fig: images 2}, \ref{fig: images 3} show the synthesis process of different textures. For all of these tests, we run second layer synthesis and run for 600 iterations. 
We plot the logarithm of the relative loss in these plots, i.e., $\log_{10}(\text{loss})$ where $\text{loss} = \frac{\| S_J^i x - S_J^i x^{\star}\|}{\|S_J^i x\|}$ for $i = 1,2$, $x$ is the reference image, and $x^{\star}$ is the synthesized image. 

Figure \ref{fig: images 1} shows the synthesis process of micro-textures plus flowers. We start from noise to match the second layer to simplify the analysis. After iteration 0, the images already look much better than noise, although the colors are not fully matched.  At iteration 40, the second layer loss is reduced to approximately $10^{-3.5}$ and the synthesized images are of high quality. As we continue the algorithm, at iteration 100 and 590, the images smoothly shift from one sample to another sample which are from the same texture class and the loss barely changed. In general, for such micro-textures, the algorithm converges fast and the loss is reduced long before we stop.

\begin{figure}
    \centering
    \includegraphics[width = 0.95 \linewidth]{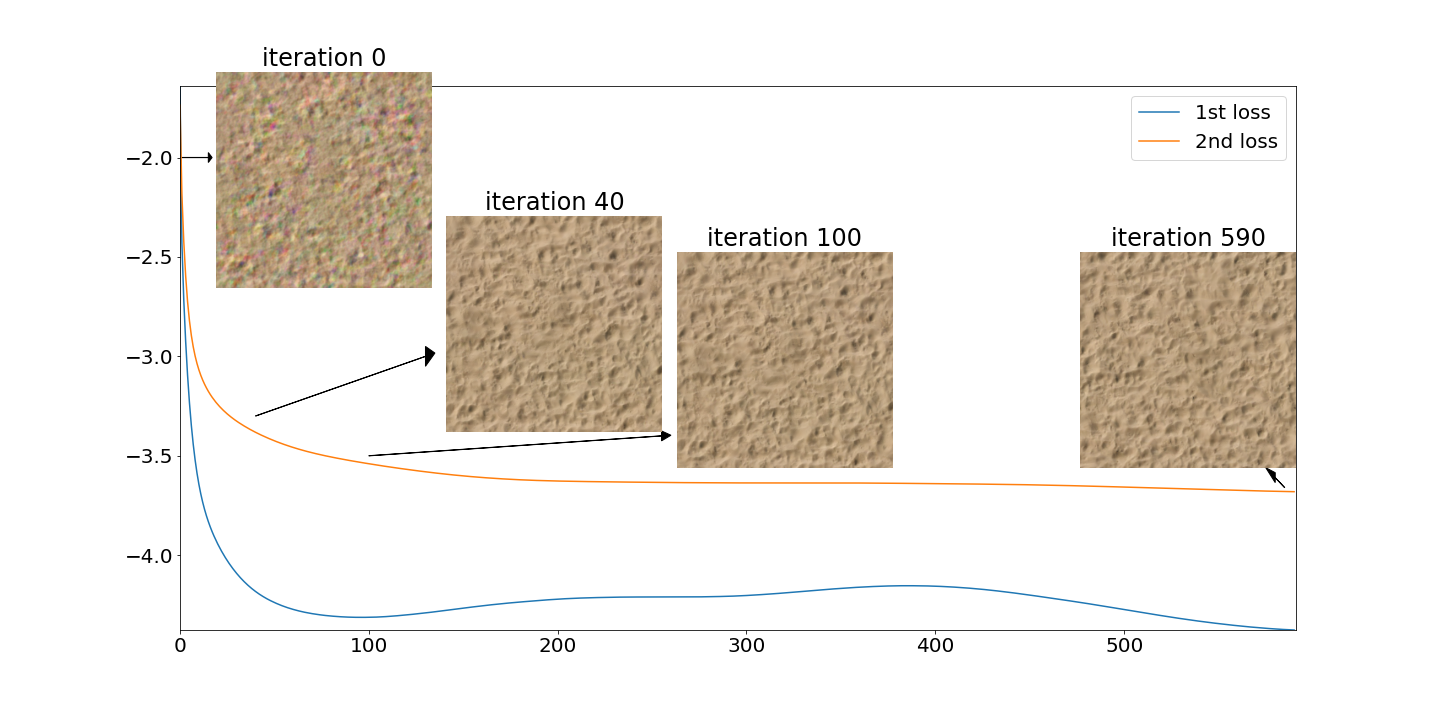}
    \includegraphics[width = 0.95 \linewidth]{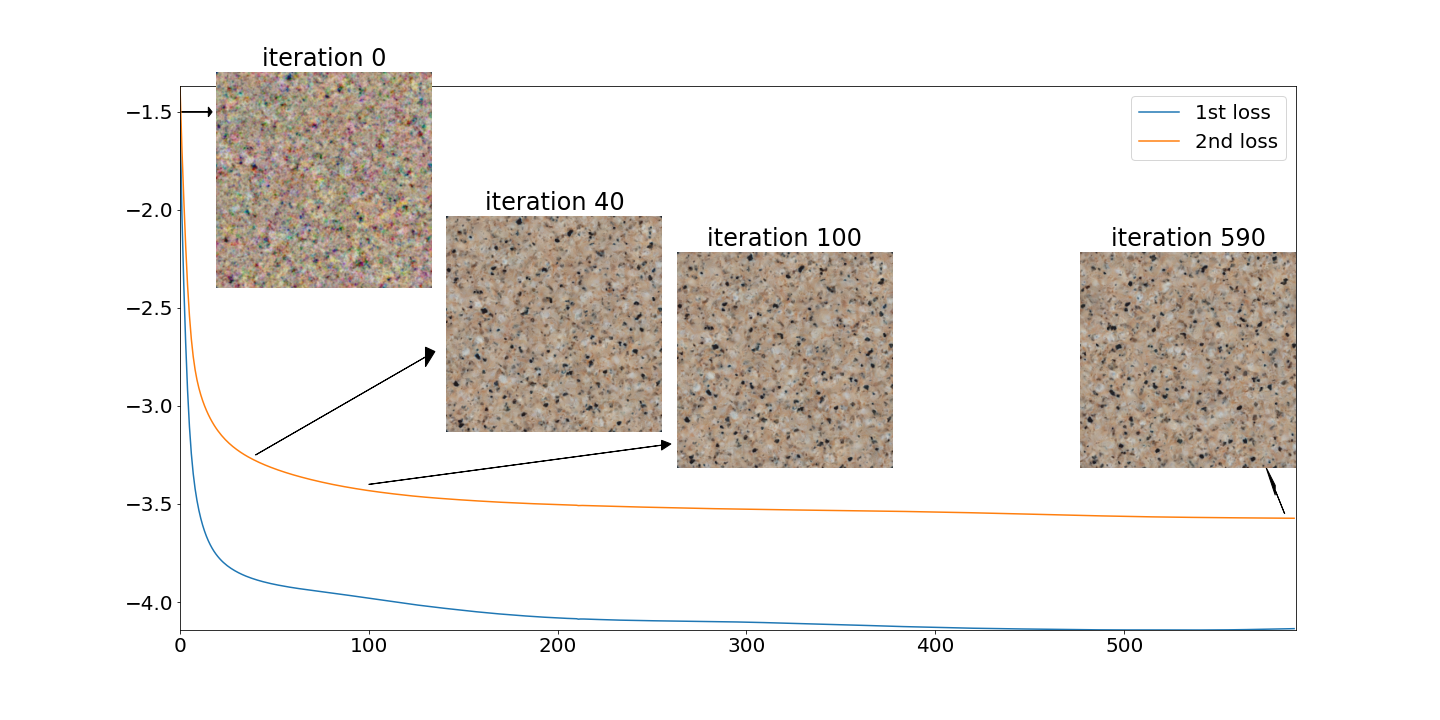}
    \includegraphics[width = 0.95 \linewidth]{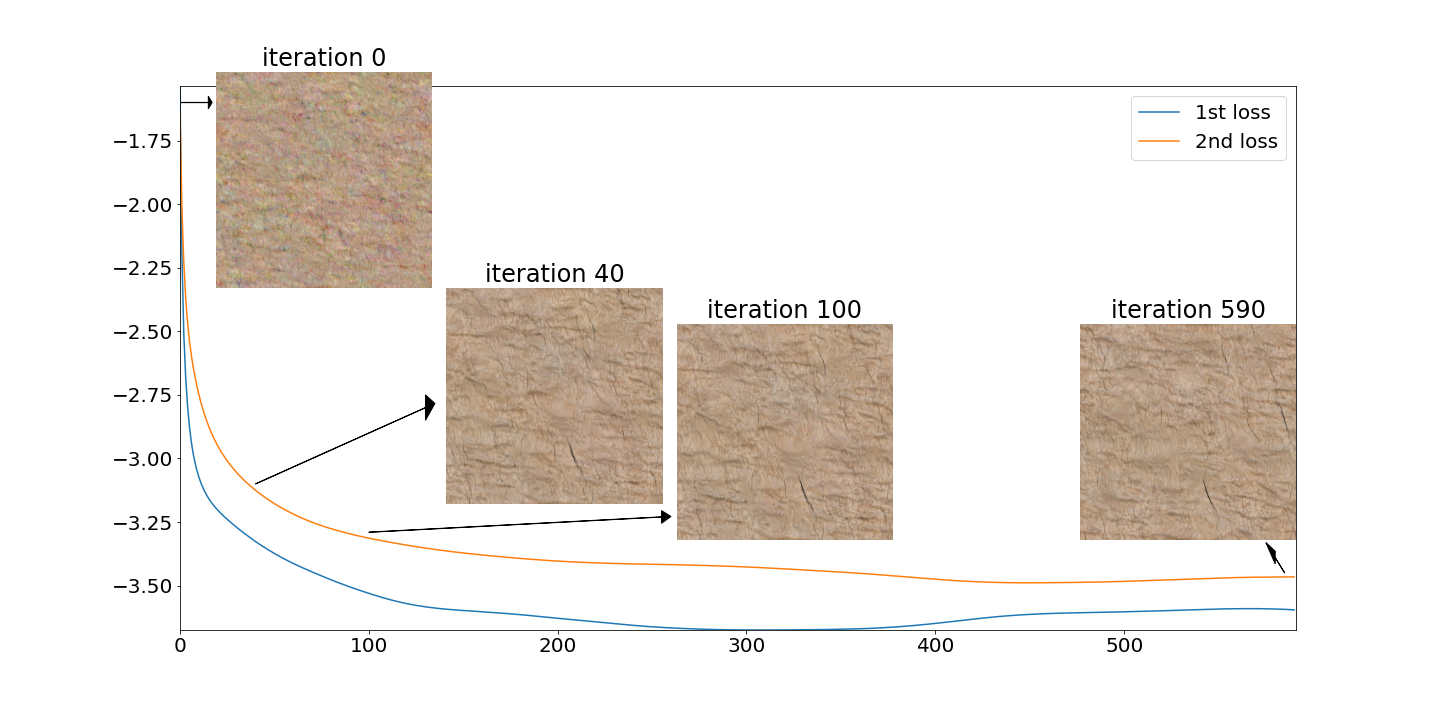}
    \includegraphics[width = 0.95 \linewidth]{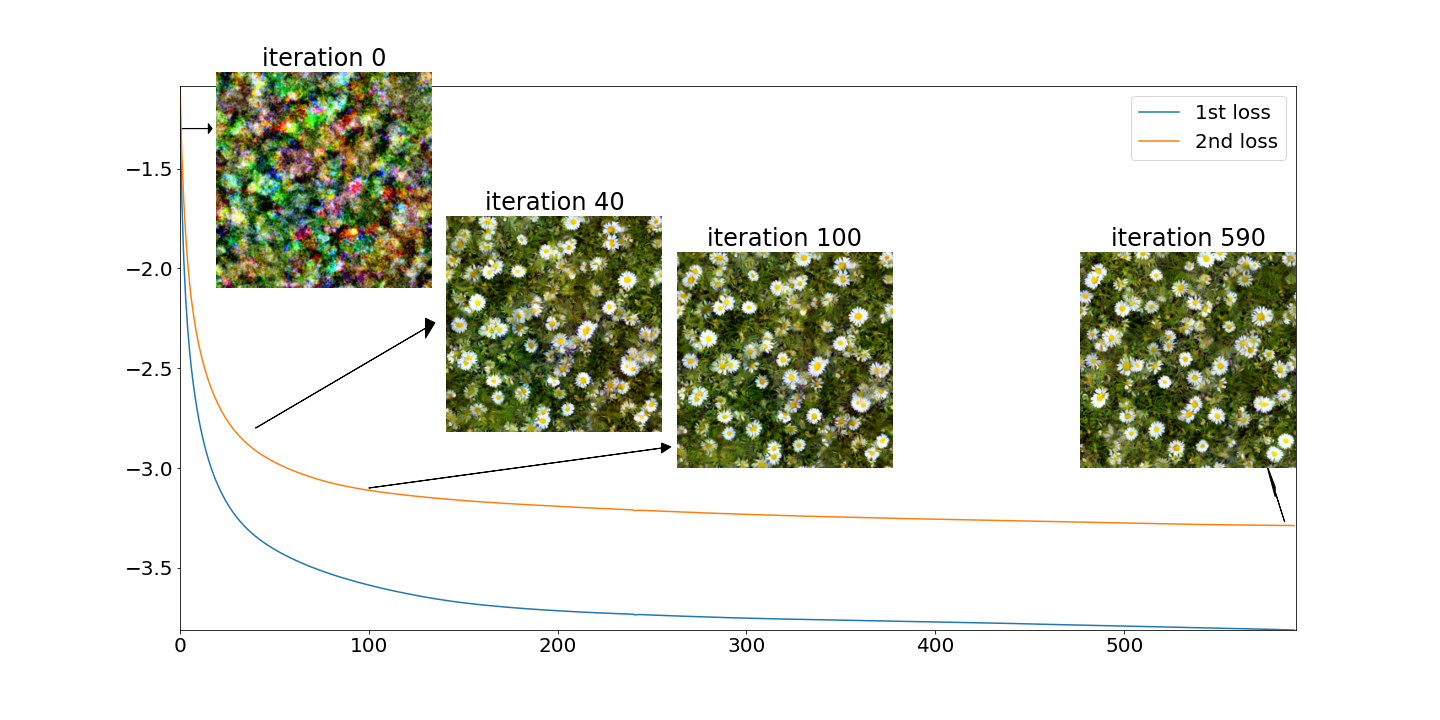}
    \caption{The synthesis process for different micro-textures plus flowers.}
    \label{fig: images 1}
\end{figure}

Figure \ref{fig: images 2} shows the synthesis process for textures that have more structure. Still we start from noise. At iteration 0, the colors are mismatched, similar to Figure \ref{fig: images 1}. At iteration 40 and 100, the colors are matched and general structures are learned, but the holes and frames are not aligned perfectly. And the relative loss is not reduced. Finally at iteration 590, the second layer loss is reduced to around $10^{-4}$ and the synthesized images are of good quality. The algorithm on these types of images generally takes longer to converge than for micro-textures. As the number of iterations increases, the image quality is continuously improved.

\begin{figure}
    \centering
    \includegraphics[width = 0.95 \linewidth]{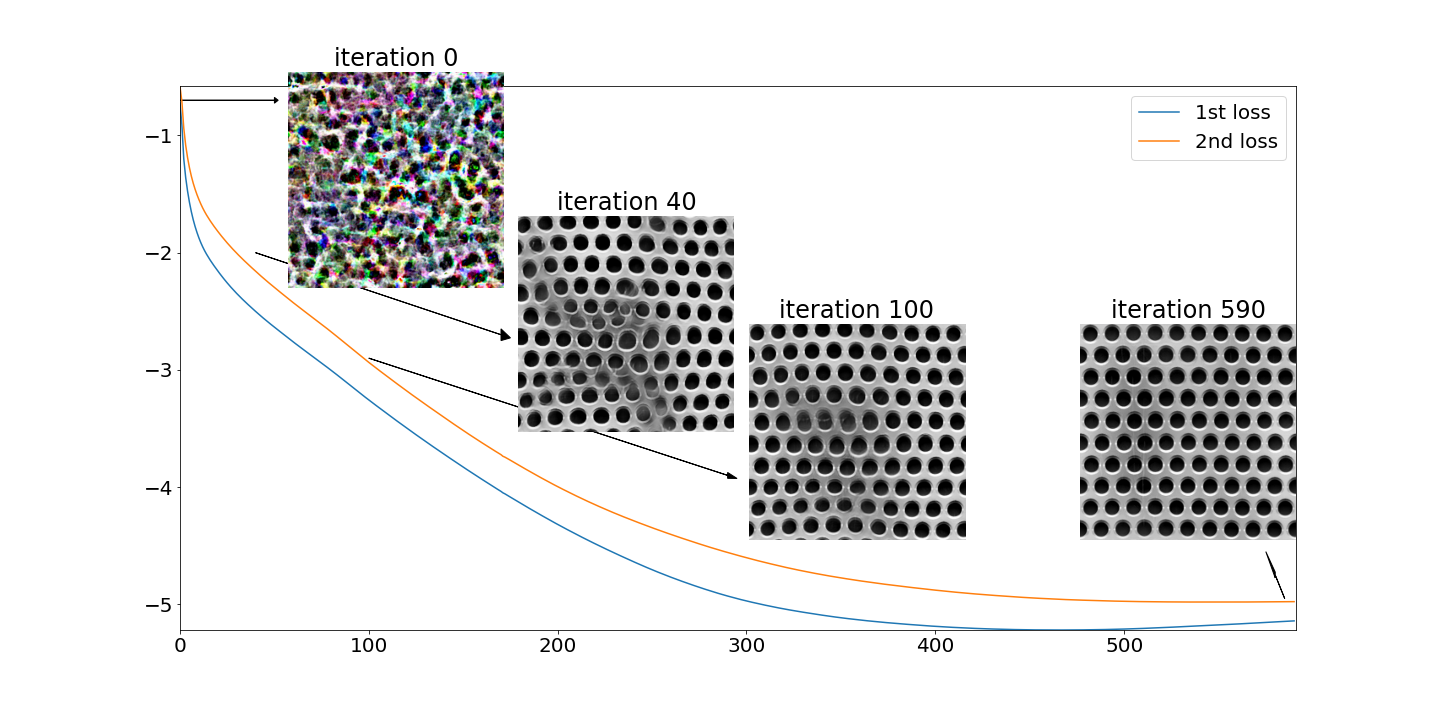}
    \includegraphics[width = 0.95 \linewidth]{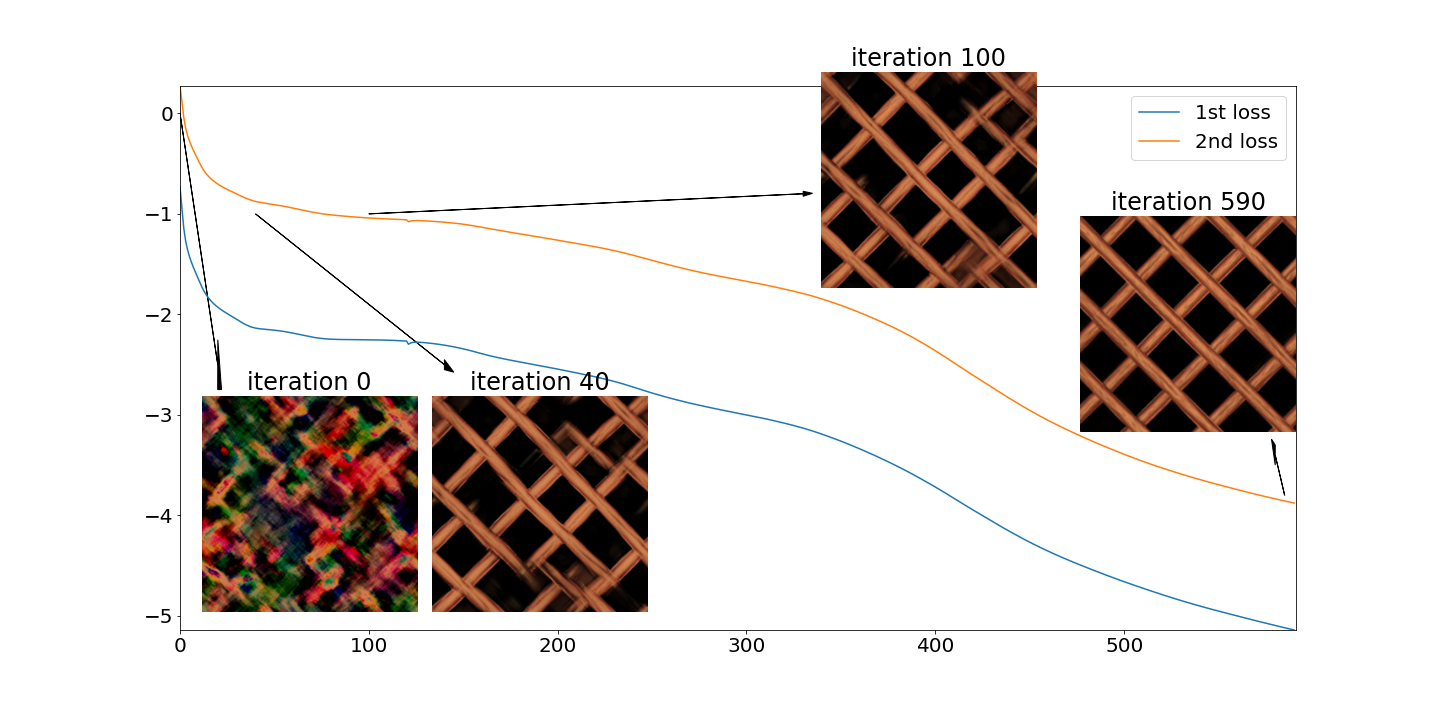}
    \caption{The synthesis process for different macro-textures with rigid patterns.}
    \label{fig: images 2}
\end{figure}

\begin{figure}
    \centering
    \includegraphics[width = 0.95 \linewidth]{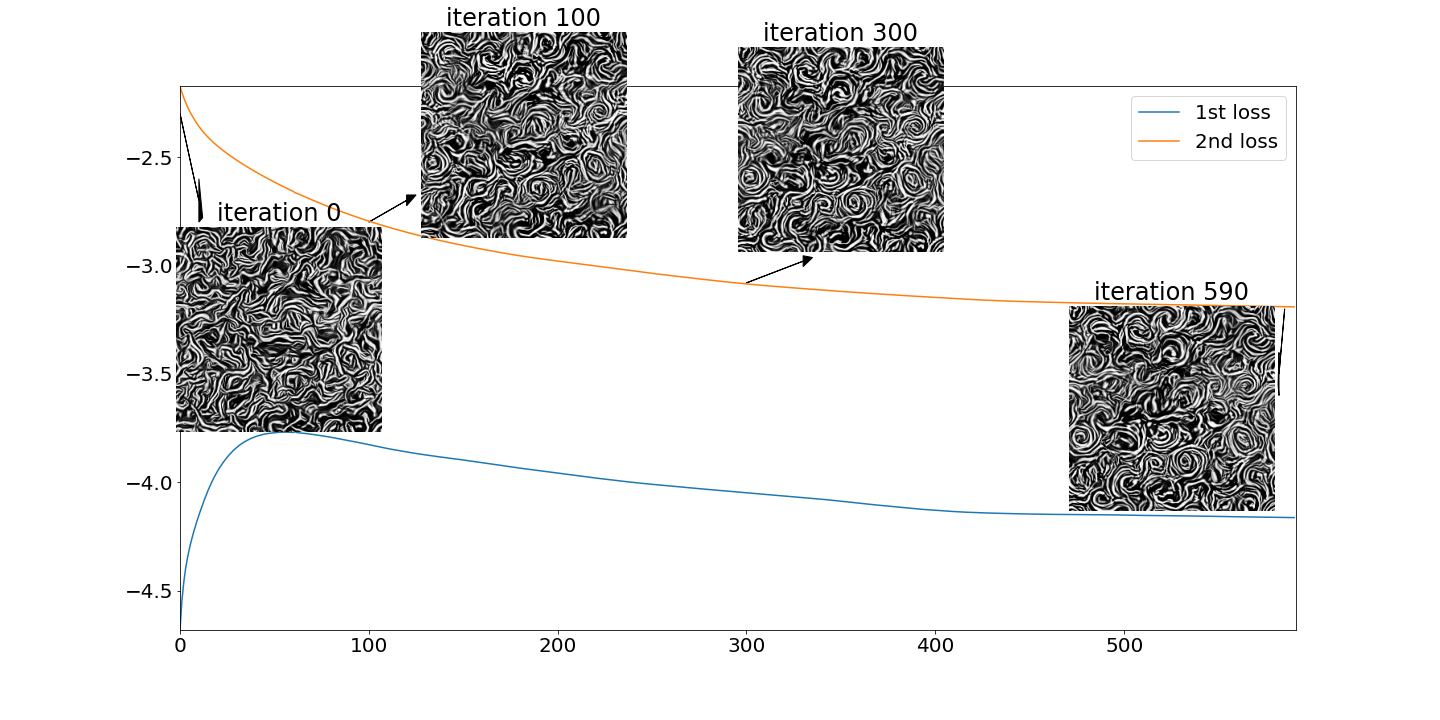}
    \caption{The synthesis process when initializing from the first layer synthesis, for a texture with complex patterns.}
    \label{fig: images 3}
\end{figure}

Figure \ref{fig: images 3} shows the synthesis process of a texture that has intricate patterns. Here we start from the first layer result to show how the second layer statistics improve the image quality. At iteration 0, which is essentially the synthesized image from the first layer experiment, there are barely swirls but only curves. As the algorithm goes on, we notice the first layer loss actually increases while the second layer loss is decreasing. The increasing first layer loss generally does not affect image quality so long as the second layer loss decreases. By the last iteration, the synthesized image contains longer and smoother swirls. 

\section{Additional numerical results}
\label{sec: additional numerical results}
We show more of our synthesis results in Figure \ref{fig:final additional}. These new images come from the same two databases in the main text: DTD database\footnote{https://www.robots.ox.ac.uk/~vgg/data/dtd/} and CG Texture database\footnote{https://www.textures.com/}. 

\begin{figure*}
    \centering
    \includegraphics[width = 0.45 \linewidth]{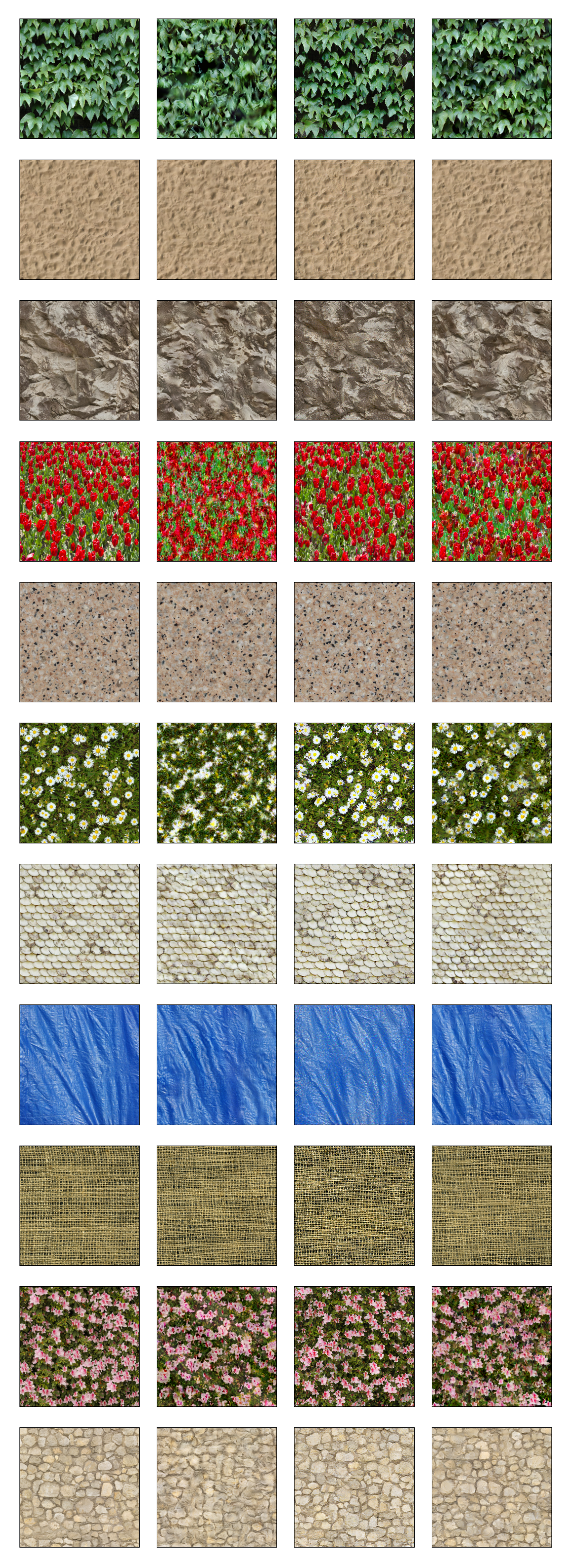}
    \includegraphics[width = 0.45 \linewidth]{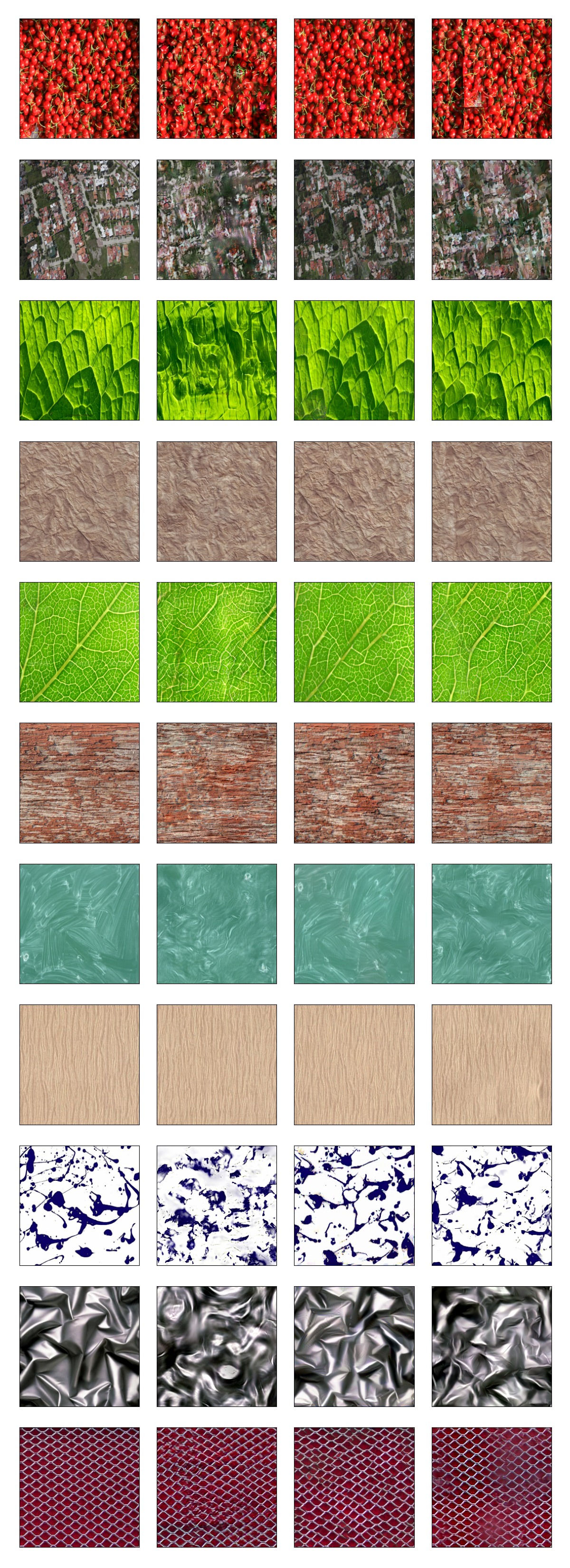}
    \caption{Synthesis results compared to other models. \textbf{Left:} Original images. \textbf{Middle Left:} Results from Portilla and Simoncelli \cite{Portilla}. \textbf{Middle Right: } Results from Gatys \textit{et al.} \cite{vggsyn}. \textbf{Right:} Results from our two layer model.}
    \label{fig:final additional}
\end{figure*}

\bibliographystyle{IEEEtran}
\bibliography{main_bib}

\clearpage
\end{document}